\documentclass{article}
\usepackage[T1]{fontenc}
\usepackage{microtype}
\usepackage{graphicx}
\usepackage{subcaption}
\usepackage{booktabs} 

\usepackage{hyperref}
\usepackage{multirow}
\usepackage[table]{xcolor} 

\usepackage[preprint]{icml2026}

\usepackage{amsmath}
\usepackage{amssymb}
\usepackage{mathtools}
\usepackage{amsthm}
\usepackage{algorithm}
\usepackage{algorithmic}

\usepackage[capitalize,noabbrev]{cleveref}

\theoremstyle{plain}
\newtheorem{theorem}{Theorem}[section]
\newtheorem{proposition}[theorem]{Proposition}

\theoremstyle{definition}
\newtheorem{definition}[theorem]{Definition}
\newtheorem{assumption}[theorem]{Assumption}
\theoremstyle{remark}

\newcommand{\framework}{RoboStriker}
\usepackage[textsize=tiny]{todonotes}

\icmltitlerunning{RoboStriker: Hierarchical Decision-Making for Autonomous Humanoid Boxing}
\begin{document}

\twocolumn[
  \icmltitle{RoboStriker: Hierarchical Decision-Making for Autonomous Humanoid Boxing}



  \icmlsetsymbol{equal}{*}

  \begin{icmlauthorlist}
    \icmlauthor{Kangning Yin}{equal,aaa,bbb,ccc}
    \icmlauthor{Zhe Cao}{equal,aaa,bbb}
    \icmlauthor{Wentao Dong}{aaa,bbb}
    \icmlauthor{Weishuai Zeng}{bbb,ddd}
    \icmlauthor{Tianyi Zhang}{aaa}
    \icmlauthor{Qiang Zhang}{eee}
    \icmlauthor{Jingbo Wang}{bbb}
    \icmlauthor{Jiangmiao Pang}{bbb}
    \icmlauthor{Ming Zhou}{bbb}
    \icmlauthor{Weinan Zhang}{aaa,bbb,ccc}
  \end{icmlauthorlist}

  \icmlaffiliation{aaa}{Shanghai Jiao Tong University}
  \icmlaffiliation{bbb}{Shanghai Artificial Intelligence Laboratory}
  \icmlaffiliation{ccc}{Shanghai Innovation Institute}
  \icmlaffiliation{ddd}{Peking University}
  \icmlaffiliation{eee}{The Hong Kong University of Science and Technology (Guangzhou),}

  \icmlcorrespondingauthor{Ming Zhou}{zhouming@pjlab.org.cn}
  \icmlcorrespondingauthor{Weinan Zhang}{wnzhang@sjtu.edu.cn}

  \icmlkeywords{Machine Learning, ICML}

  \vskip 0.3in
]



\printAffiliationsAndNotice{}  

\begin{abstract}
  Achieving human-level competitive intelligence and physical agility in humanoid robots remains a major challenge, particularly in contact-rich and highly dynamic tasks such as boxing. While Multi-Agent Reinforcement Learning (MARL) offers a principled framework for strategic interaction, its direct application to humanoid control is hindered by high-dimensional contact dynamics and the absence of strong physical motion priors. We propose \framework{}, a hierarchical three-stage framework that enables fully autonomous humanoid boxing by decoupling high-level strategic reasoning from low-level physical execution. The framework first learns a comprehensive repertoire of boxing skills by training a single-agent motion tracker on human motion capture data. These skills are subsequently distilled into a structured latent manifold, regularized by projecting the Gaussian-parameterized distribution onto a unit hypersphere. This topological constraint effectively confines exploration to the subspace of physically plausible motions. In the final stage, we introduce Latent-Space Neural Fictitious Self-Play (LS-NFSP), where competing agents learn competitive tactics by interacting within the latent action space rather than the raw motor space, significantly stabilizing multi-agent training. Experimental results demonstrate that \framework{} achieves superior competitive performance in simulation and exhibits sim-to-real transfer. Our website is available at \href{https://yinkangning0124.github.io/RoboStriker/}{RoboStriker}.
\end{abstract}

\begin{figure}
    \vspace{-2mm}
    \includegraphics[width=\columnwidth]{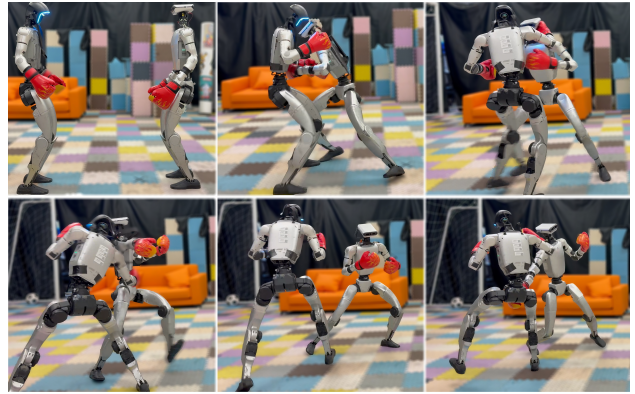}
    \caption{Real-world clips of humanoid boxing using RoboStriker, showcasing agile, contact-rich punches and defenses under physical constraints.}
    \vspace{-2mm}
    \label{fig:realworld}
\end{figure}

\section{Introduction}

Humanoid robots serve as a critical frontier for embodied intelligence, with recent breakthroughs demonstrating exceptional capabilities in locomotion and agile maneuvers~\cite{liao2025beyondmimicmotiontrackingversatile, yin2025unitracker, zeng2025behaviorfoundationmodelhumanoid, zhang2025track}. These successes, however, have largely focused on single-agent execution, where the environment is passive and predefined. Once other agents are introduced, as in competitive physical interactions such as boxing, the problem fundamentally shifts from motion-generation to strategic co-adaptation under physical constraints.
This shift naturally situates humanoid competition within the scope of Multi-Agent Reinforcement Learning (MARL).
Yet, unlike symbolic or abstract domains where MARL has achieved grandmaster-level success~\cite{silver2017mastering, vinyals2019grandmaster}, embodied humanoid competition requires reasoning through high-dimensional, contact-rich dynamics, where every strategic decision is physically bound by balance stability and actuation limits. At this frontier of multi-agent embodiment, we argue that there lie two intrinsic and domain-agnostic contradictions that characterize the competitive humanoid systems.

\vspace{-3mm}
\paragraph{Physical Feasibility vs. Non-stationary Learning.}
Humanoid control operates in a high-dimensional continuous action space, which is subject to strict physical constraints, including contact stability, joint limits and underactuated dynamics. Effective strategy exploration, however, requires diverse and sometimes aggressive behaviors. As a result, such unconstrained policy exploration often leads to physically invalid motions and fails to keep balance~\cite{peng2018deepmimic, peng2021amp}, while overly restrictive controllers collapse strategy learning. This further creates an embodied cold-start problem: agents cannot evolve competitive strategies without first maintaining a stable physical stance~\cite{luo2023universal}.
\vspace{-3mm}
\paragraph{Strategy Evolution vs. System Stability.}
MARL via self-play~\cite{samuel1959some,hernandez2019generalized} relies on continuously evolving opponent policies, which introduces severe non-stationarity during training. While such dynamics are essential for discovering competitive strategies, humanoid systems are dynamically fragile: small distributional shifts in opponent behavior can destabilize balance and contact, leading to training collapse. This results in another tension between competitive strategy evolution and stable embodied interaction.

Existing approaches address these challenges in an isolated manner. Game-theoretic MARL methods such as Neural Fictitious Self-Play (NFSP)~\cite{heinrich2016deep} are designed for abstract or weakly grounded environments, lacking inductive biases for physical feasibility. Conversely, embodied control frameworks such as Adversarial Motion Priors (AMP)~\cite{peng2021amp} and DeepMimic~\cite{peng2018deepmimic} excel at learning robust single-agent motor skills via imitation, but do not support strategic co-evolution or opponent-aware adaptation. As a result, neither line of work alone can resolve the dual contradictions inherent to embodied competitive tasks.

In this work, we argue that resolving these challenges requires a structured decoupling of physical control and strategic reasoning, coupled with a training mechanism that stabilizes competitive evolution. To this end, we propose a hierarchical framework that decomposes embodied MARL into three coupled layers: (1) physically grounded motion library, (2) structured latent motion space for strategy representation and (3) multi-agent strategy evolution over this latent space.
Concretely, in the first layer, we establish a DeepMimic-based tracking policy~\cite{peng2018deepmimic} to faithfully reproduce diverse human motion primitives, ensuring the agent masters a repertoire of physically feasible kinematic skills. In the second layer, we construct a bounded latent motion space that supports diverse motion generation and serves as the effective strategy space. In the third layer, we perform competitive strategy learning via NFSP, namely Latent Space NFSP (LS-NFSP), which enables latent multi-agent co-evolution within the bounded latent manifold.
Crucially, the bounded latent motion space is not merely a representational choice, it defines a compact, physically safe strategy search space that allows LS-NFSP to operate without inducing motor-level instability. Meanwhile, AMP-based~\cite{peng2021amp} curriculum warmup initializes the policy within this manifold, mitigating the competitive cold-start problem and substantially reducing the non-stationarity during fictitious self-play. Together, these mechanisms resolve the aforementioned contradictions of embodied MARL by reconciling physical stability with strategic diversity, and stabilizing strategy evolution under continuous competition.

We evaluate our framework on a humanoid boxing task using Unitree G1 robots with 29 degrees of freedom~\cite{unitreeg1}, demonstrating that the proposed method achieves substantially improved performance, robustness, and convergence stability compared to existing baselines. Beyond this specific task, our framework provides a general recipe for embodied multi-agent competition, offering a principled pathway to scale MARL from abstract games to physically grounded robotic systems. Our contributions are summarized as follows:
\begin{itemize}
    \item To the best of our knowledge, this work is the first to formally characterize the intrinsic contradictions in embodied MARL, specifically the trade-off between physical feasibility and non-stationary learning, and the conflict between strategy evolution and system stability, using humanoid competition as a prime instantiation.
    \item We propose a hierarchical framework that decouples high-level strategic reasoning from low-level physical execution, providing a stable pathway for evolving dynamic combat behaviors.
    \item We demonstrate the robustness of our framework through the emergence of tactical boxing in simulation and its zero-shot transfer to physical humanoid robots (~\autoref{fig:realworld}).
\end{itemize}

\begin{figure*}
    \centering
    \includegraphics[width=\textwidth]{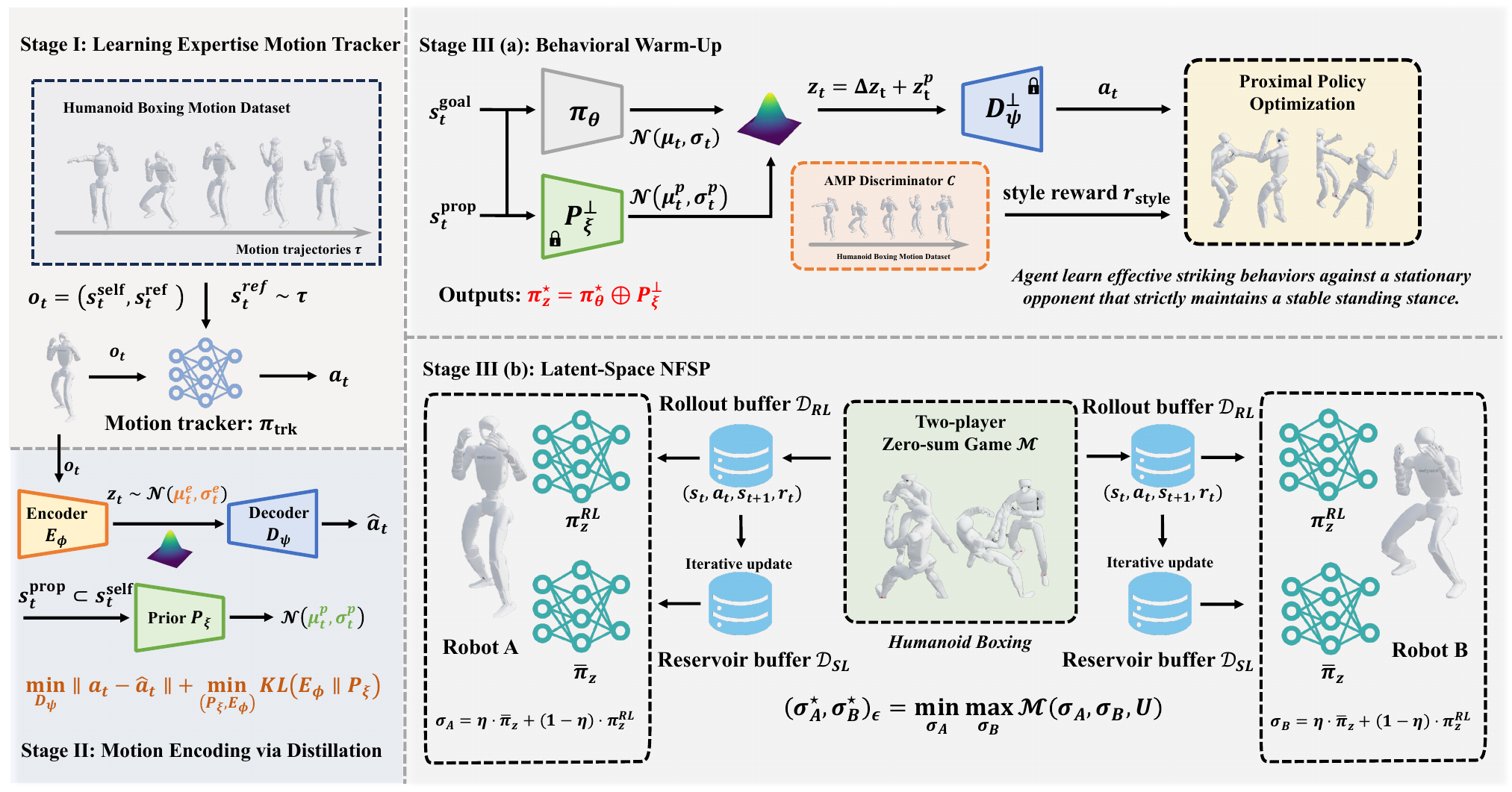}
    \vspace{-6mm}
    \caption{Overview of \framework{}. \textbf{Stage I} pretrains a motion tracker to produce physically plausible humanoid behaviors; \textbf{Stage II} compresses these behaviors into a bounded latent space for high-level control; \textbf{Stage III(a)} runs warm-up training on top of Stage II, then followed with \textbf{Stage III(b)}, a NFSP over latent-space to solve the humanoid boxing task as a two-player zero-sum game.}
    
    \label{fig:pipeline}
\end{figure*}
\section{Related Work}
\paragraph{Humanoid Motion Synthesis and Control.}
The generation of lifelike humanoid movement has evolved from purely kinematic trajectory synthesis to physics-aware control policies.
Kinematic Motion Synthesis historically relied on Inverse Kinematics (IK) and space-time optimization to retarget motion capture data onto robotic morphologies while satisfying joint and geometric constraints~\cite{gleicher1998retargetting, yamane2004synthesizing}. While deep generative models have significantly improved the diversity of synthesized trajectories~\cite{holden2016deep, zhang2023generating, petrovich2023tmr, yin2024tri}, these purely kinematic approaches lack dynamical awareness, often resulting in physical violations like "foot sliding" when transferred to simulators.
Physics-based control addresses the challenge of executing motions within a dynamical environment by accounting for gravity, inertia, and contact forces. Classical robot learning methods utilize Zero Moment Point (ZMP) stability and Model Predictive Control (MPC) to ensure bipedal balance~\cite{kajita2003biped, kuindersma2016optimization}. Modern reinforcement learning (RL)~\cite{sutton1998reinforcement} has advanced this through tracking-based imitation~\cite{peng2018deepmimic} and adversarial stylistic regularization~\cite{peng2021amp}. Crucially, the development of universal motion trackers~\cite{chen2025gmt, yin2025unitracker, zhang2025track} has enabled the learning of unified policies capable of tracking a diverse repertoire of human motions. While these universal trackers provide a robust foundation for executing a vast library of skills, they primarily focus on high-fidelity imitation of pre-defined trajectories. In contrast, our work builds upon universal tracking but extends it to a reactive, multi-agent setting, where the controller must maintain physical stability while simultaneously adapting to the non-stationary perturbations of an opponent.

\paragraph{Skill-based Latent Representation Learning.}
To manage the complexity of humanoid action spaces, recent research has shifted toward learning low-level skills in latent manifolds. PULSE~\cite{luo2023universal} establishes that distilling vast motion capture datasets into latent embeddings allows high-level policies to invoke complex behaviors via a simplified behavioral interface. Further refinements like CALM~\cite{tessler2023calm} have improved the semantic diversity and controllability of these latent spaces. However, these manifolds are typically treated as unbounded Euclidean spaces where unconstrained exploration can still lead to out-of-distribution actions that cause physical collapse. We depart from this by imposing topological regularization—projecting the latent space onto a unit hypersphere~\cite{peng2022ase, davidson2018hyperspherical}. This geometric constraint ensures that the strategic search remains bounded within a manifold of physically plausible maneuvers, effectively isolating the high-level policy from the dynamical instabilities of the robot's joints.
In this work, we leverage this topologically constrained manifold to decouple strategic reasoning from dynamics control, enabling the efficient evolution of complex competitive behaviors that would be intractable in raw observation spaces

\paragraph{Multi-Agent RL and Fictitious Self-Play.}
Self-play has emerged as a principled paradigm for learning robust policies in competitive settings by framing training as an implicit game-solving process. Classical fictitious play~\cite{brown1951iterative} iteratively best-responds to the empirical distribution of past opponents and is known to converge to a Nash equilibrium in certain classes of games. Modern adaptations of fictitious self-play (FSP) integrate reinforcement learning with historical policy averaging, achieving strong empirical performance in large-scale imperfect-information games such as poker~\cite{heinrich2016deep}. Subsequent extensions have explored continuous control and function approximation, showing that approximate best-response dynamics can converge to approximate Nash equilibria under bounded strategy spaces and smoothness assumptions~\cite{mertikopoulos2019learning}. Despite these advances, applying self-play directly in high-dimensional MARL often leads to cyclic strategic oscillations and training instabilities, particularly when policies operate in unconstrained continuous action spaces~\cite{lanctot2017unified}. Recent works have highlighted that constraining the effective strategy space—through regularization, policy parameterization, or latent abstractions—can significantly improve convergence behavior and stability in competitive learning~\cite{balduzzi2018mechanics}. From a game-theoretic perspective, restricting policies to a compact, continuous strategy set ensures the existence of mixed-strategy Nash equilibria and supports convergence guarantees for no-regret and fictitious-play-style dynamics~\cite{rosen1965existence, ramazi2020global}.


Our work builds on this line of research by situating fictitious self-play within a bounded latent strategy manifold. By conducting the competitive learning process in a compact hyperspherical latent space rather than the raw motor command space, we effectively limit the strategic degrees of freedom available to each agent. This design mitigates non-stationarity arising from unbounded policy updates and aligns the training dynamics with theoretical convergence results for approximate Nash equilibria in continuous games with compact strategy sets.


\section{Method}
We propose a hierarchical competitive learning framework that enables fully autonomous humanoid boxing through structured decision making in a continuous latent strategy space. The interaction between two humanoid agents is formalized as a two-player zero-sum Markov game, where each agent selects high-level motion intents rather than direct motor commands. A shared expert motion decoder executes these intents while preserving physical feasibility and human-like behaviors. Training proceeds through a three-stage pipeline: learning a robust low-level motion tracker from human demonstrations (Section~\ref{sec:stage1}), distilling motion primitives into a bounded continuous latent space that serves as the strategic action space (Section~\ref{sec:stage2}), and progressively introducing competition via behavioral warmup (Section~\ref{sec:stage3a}) and Latent-Space Neural Fictitious Self-Play (LS-NFSP) (Section~\ref{sec:stage3b}). This design decouples strategic reasoning from low-level control, stabilizes multi-agent learning and facilitates convergence toward approximate Nash equilibria in high-dimensional humanoid combat scenarios.

\subsection{Two-player Zero-sum Markov Games}
\label{sec:two_player_zero_sum_games}
We define a two-player zero-sum Markov Game as $\mathcal{M}=\langle \mathcal{I}, \mathcal{S}, \{\mathcal{O}\}^{i \in \mathcal{I}}, \{\mathcal{Z}\}^{i \in \mathcal{I}}, \mathcal{P}, \{\mathcal{U}\}^{i\in \mathcal{I}}, \gamma \rangle$. $\mathcal{I}=\{1,2\}$ the set of players, $\mathcal{S}$ the state space, $\{\mathcal{O}\}^{i \in \mathcal{I}}$ the set of player observation spaces. We assume the existence of a (possibly implicit) mapping $\mathcal{F}: \mathcal{O}^1 \times \cdots \times \mathcal{O}^{\vert \mathcal{I} \vert} \rightarrow \mathcal{S}$ that relates joint observations to the underlying environment state. $\{\mathcal{Z}\}^{i \in \mathcal{I}}$ the set of player action spaces where $\mathcal{Z}^i \subset \mathbb{R}^d$ and $d$ the dimension size, $\mathcal{P}: \mathcal{S} \times \mathcal{Z}^1\times\cdots\times\mathcal{Z}^{\vert \mathcal{I} \vert} \rightarrow \Delta(\mathcal{S})$ the transition function, $\{\mathcal{U}\}^{i \in \mathcal{I}}$ the set of player utility functions and $u^i \in \mathcal{U}^i: \mathcal{S} \times \mathcal{Z}^1 \times \cdots \times \mathcal{Z}^{\vert \mathcal{I} \vert} \rightarrow \mathbb{R}$, $\gamma \in [0, 1)$ the discounted factor. Let each player $i\in \mathcal{I}$ adopt a policy $\pi^i_z: \mathcal{O}^i \rightarrow \Delta(\mathcal{Z}^i)$, and the joint policy by $\boldsymbol{\pi}_z=(\pi^1_z,\pi^2_z)$. The expected return for player $i$ under joint policy $\boldsymbol{\pi}$ from state $s \in \mathcal{S}$ satisfies
\begin{equation}
\label{eq:motion_value_func}
V^i_{\boldsymbol{\pi}_z}(s) = \mathbb{E}_{z^i_t\sim\pi^i_z(\cdot\mid o^i_t)}\left[\sum_{t=0}^{\infty}\gamma^t\,u^i(s_t, z^i_t,z^{-i}_t)\;\bigg|\;s_0=s\right],
\end{equation}
where $-i$ indicates the players except for player $i$.
In the zero-sum setting $u^1=-u^2$, the learning objective for each player $i$ is to find an optimal policy $\pi^{i,*}_z$ that maximizes its expected return against the worst-case opponent policy $\pi^{-i}_z$, i.e., to solve the following max–min optimization:
\begin{equation}
\label{eq:max_min_optimization}
\pi^{i,*}_z\in\arg\max_{\pi^i_z}\,\min_{\pi^{-i}_z}J^i(\boldsymbol{\pi}_z),
\end{equation}
where $J^i(\boldsymbol{\pi}_z):=\mathbb{E}_{s\sim p(s)}\left[V^i_{\boldsymbol{\pi}_z}(s)\right]$.
When all players simultaneously solve the above optimization up to approximation error, the resulting joint policy corresponds to an $\epsilon$-Nash Equilibrium of the Markov game, as defined below.
\begin{definition}[$\epsilon$-Nash Equilibrium~\cite{aumann1976agreeing}]
    A joint policy $\boldsymbol{\pi}_z^\star = (\pi_z^{i,\star}, \pi_z^{-i,\star})$ is said to be an \emph{$\epsilon$-Nash Equilibrium} of a two-player zero-sum Markov game $\mathcal{M}$ if no player can unilaterally improve its expected return by deviating from its policy, given that the opponent adheres to $\boldsymbol{\pi}_z^\star$. Formally, for all players $i \in \mathcal{I}$ and any alternative policy $\pi_z^i$, it holds that
    \begin{equation}
        \label{eq:nash_equilibrium}
        J^i(\pi_z^{i,\star}, \pi_z^{-i,\star}) \ge J^i(\pi_z^{i}, \pi_z^{-i,\star}) - \epsilon.
    \end{equation}
\end{definition}

In this work, we frame the boxing task that involves two humanoid robots as a zero-sum Markov game, where a player represents a robot.
Rather than acting directly in the high-dimensional motor command space, each player selects actions in a bounded continuous latent space $\mathcal{Z}^i \subset \mathbb{R}^d$.
An action $z_t^i \in \mathcal{Z}^i$ represents a high-level motion latent distilled from a large motion corpus.
In practice, we constrain $\mathcal{Z}^i$ to be compact (specifically, $\mathcal{Z}^i = \mathbb{S}^{d-1}$), which restricts exploration to a physically plausible manifold and simplifies strategic interactions in continuous control settings.

While the preceding content formulate the proposed framework, its practical implementation requires additional design considerations. In the following, we provide a detailed description of a three-stage training pipeline that realizes a hierarchical decision-making process.

\subsection{Learning Expertise Motion Tracker}
\label{sec:stage1}
In the first stage, we learn a robust low-level controller that can faithfully track diverse humanoid boxing motions and provide a stable motor foundation for higher-level strategic learning.
The tracker $\pi_{\mathrm{trk}}(a_t \mid s_t^{\mathrm{self}}, s^{\mathrm{ref}}_t)$ is trained using human motion capture data and shared across all robots,
where $s_t^{\mathrm{self}}=(s_t^{\mathrm{prop}}, s_t^{\mathrm{priv}})$. Here, $s_t^{\mathrm{prop}}$ denotes a robot's proprioceptive state at timestep $t$ and $s_t^{\mathrm{priv}}$ represents privileged observations that are inaccessible to the robot's onboard sensors. $s^{\mathrm{ref}}_t$ specifies a reference motion goal derived from human motion capture data, which is represented as a time-indexed sequence of full-body humanoid poses and velocities extracted from motion capture dataset $\mathcal{D}_{\mathrm{motion}}$.
Each $s^{\mathrm{ref}}_t$ encodes the root position and orientation, joint angles, and corresponding joint velocities in a canonical humanoid kinematic tree.
All motions are temporally aligned and retargeted to the simulated humanoid morphology, enabling consistent tracking across diverse boxing behaviors.
The objective maximizes expected tracking rewards $r_{\mathrm{trk}}$ over reference trajectories $\tau^{\mathrm{ref}}$, encouraging high-fidelity imitation while maintaining physical plausibility,
\begin{align}
    &\pi_{\mathrm{trk}}^{*} = \arg\max_{\pi_{\mathrm{trk}}} \mathbb{E}_{s^{\mathrm{ref}} \sim \tau^{\mathrm{ref}}, \tau^{\mathrm{ref}}\sim \mathcal{D}_{\mathrm{motion}}} \left[ J_{\pi_{\mathrm{trk}}}(s^{\mathrm{ref}})\right],\\\nonumber
    &\text{where } J_{\pi_{\mathrm{trk}}}(s^{\mathrm{ref}})=\mathbb{E}_{a_t \sim \pi_{\mathrm{trk}}} \left[ \sum_{t=0}^{T} \gamma^{t} \, r_{\mathrm{trk}}(s_t^{\mathrm{self}}, a_t; s^{\mathrm{ref}}) \right].
\end{align}
The reward function $r_{\mathrm{trk}}$ quantifies the similarity between the simulated humanoid and the reference motion, comprising specific terms for pose alignment, velocity tracking, and control regularization. Details of reward design and data collection are provided in Appendix~\ref{app:stage1}.

\subsection{Encoding Motion via Topological Latent Distillation}
\label{sec:stage2}
To enable learnable strategic control, we project the motion space into a continuous latent space $\mathcal{Z}$.
This is achieved via a teacher-student distillation framework~\cite{ross2011reductionimitationlearningstructured} consisting of an encoder $E_\phi$, a decoder $D_\psi$, and a state-conditioned latent prior $P_\xi$.
The encoder maps observations $o_t=(s_t^{\mathrm{self}}, s^{\mathrm{ref}}_t)$ to a distribution of latent codes $z_t$, i.e.,
\begin{equation} 
z_t \sim E_{\phi}(\cdot \mid o_t).
\end{equation}
In our implementation, we model $E_{\phi}(z_t)$ as diagonal Gaussian $\mathcal{N}(z_t \mid \mu^e_t, \sigma^e_t)$. Then the decoder is trained to reconstructs the teacher’s actions $a_t \sim \pi_{\mathrm{trk}}(\cdot \mid o_t)$ conditioned on $(s^{\mathrm{prop}}_t, z_t)$ as
\begin{equation}
    D^{\star}_{\psi} = \arg\min_{D_{\psi}} \parallel a_t - \hat{a}_t\parallel,
\end{equation}
where $\hat{a}_t \sim D_{\psi}(\cdot \mid s^{\mathrm{prop}}_t,z_t)$.
Acknowledging the inherent state-dependency of motion generation, we explicitly learn a state-conditioned latent prior, denoted as $P_{\xi}(z_t\mid s^{\mathrm{prop}}_t)=\mathcal{N}(z_t \mid \mu^p_t,\sigma^p_t)$. This prior serves to model valid transitions from the current proprioceptive state and constrains the encoder $E_\phi(z_t \mid o_t)$ via KL-regularization~\cite{hershey2007approximating}, thereby preventing posterior collapse, i.e.,
\begin{equation}
    \label{eq:learning_prior}
    (E^{\star}_{\phi}, P^{\star}_\xi) = \arg\min_{(E_\phi, P_\xi)} D_{KL}\left(E_\phi(z_t \mid o_t) \parallel P_\xi(z_t \mid s^{\mathrm{prop}}_t)\right).
\end{equation}

A key design choice is to enforce the latent space $\mathcal{Z}$ to be bounded and continuous. To this end, rather than allowing unconstrained latent representations, we normalize latent codes onto a compact manifold, which restricts motion commands to a physically plausible set and prevents out-of-distribution execution by the decoder. Under this formulation, each latent action $z\in\mathcal{Z}$ fully specifies an motion decision, and we therefore model $\mathcal{Z}$ as the effective strategy space of the induced two-player game in Section~\ref{sec:two_player_zero_sum_games}. In implementation, we let $z$ satisfies the requirements with normalization as
\begin{equation}
    \hat{z} = \mathrm{Normalize}(z)=\frac{E_\phi(o)}{\| E_\phi(o) \|_2},
\end{equation}
thereby constraining the latent representation of $z$ to lie on the unit hypersphere. The total training objective is detailed in Appendix~\ref{app:stage2}.

\subsection{Behavioral Warmup with Adversarial Priors}
\label{sec:stage3a}
Considering direct competitive self-play from scratch is unstable due to the lack of basic tactical competence. We therefore introduce a behavioral warmup stage, where agents learn effective striking behaviors against a stationary opponent that strictly maintains a stable standing stance. A learnable residual policy $\pi_{\theta}(\cdot \mid s^{\mathrm{goal}}_t)$ is defined as outputs residual latent commands $\Delta z_t$ over a fixed behavioral prior $P^{\perp}_{\xi}$ from the second stage, ensuring stable and human-like motions.
Then, the complete form of $\pi_{z}$ is
\begin{equation}
    \pi_z(\cdot \mid s^{\mathrm{prop}}, s^{\mathrm{goal}}) = \pi_{\theta}(\cdot \mid s^{\mathrm{goal}}) \oplus P^{\perp}_{\xi}(\cdot \mid s^{\mathrm{prop}}_t),
\end{equation}
and the corresponding generation of $z_t$ is formulated as
\begin{equation}
    z_t=\mathrm{Normalize}(\Delta z_t + z^p_t),
\end{equation}
where $\Delta z_t \sim \pi_{\theta}(\cdot \mid s^{\mathrm{goal}})$ and $z^p_t \sim P^{\perp}_\xi(\cdot \mid s^{\mathrm{prop}})$.

To prevent degeneration of motion quality, we regularize the warmup training with AMP~\cite{peng2021amp}. Specifically, a discriminator $C$ is trained to distinguish motion transitions generated by the agent operating via the frozen decoder $D_\psi$ from reference motion data, providing a style-consistency reward $r_{\mathrm{style}}$. Thus, the learning objective is to maximize the expected hybrid return:
\begin{equation}
    \pi_z^\star = \arg\max_{\pi_{z}} \mathbb{E}_{\tau \sim (\pi_z, D_\psi)} \left[ \sum_{t=0}^{T} \gamma^t R(s_t,z_t) \right],
\end{equation}
where $R(s_t,z_t)=w_{\mathrm{task}} \cdot r_{\mathrm{task}}(s_t) + w_{\mathrm{style}} \cdot r_{\mathrm{style}}(s_t, s_{t+1})$, $\tau$ denotes the trajectory induced by the policy $\pi_z$ and the frozen decoder $D_\psi$. $w_{\mathrm{task}}$ and $w_{\mathrm{style}}$ is the weighting coefficient. The detail of the task observation $s_t^{\mathrm{goal}}$, the task reward definition and the AMP practice is illustrated in the Appendix~\ref{app:stage3a}.

\subsection{Latent-Space Neural Fictitious Self-Play}
\label{sec:stage3b}
Building on the behavioral warmup stage, we adopt Latent-Space Neural Fictitious Self-Play (LS-NFSP) to enable competitive co-evolution. Our approach inherit the core principles of Neural Fictitious Self-Play (NFSP)~\cite{heinrich2016deep},  while implementing them over a structured latent action space $\mathcal{Z}$ rather than the raw motor space. In LS-NFSP, each player is controlled by an independent LS-NFSP agent that learns through simultaneous self-play interactions with its opponents. An LS-NFSP agent interacts with its opponents and records its experience into two buffers, a reinforcement learning dataset $\mathcal{D}_{RL}$ that stores transitions and a supervised learning dataset $\mathcal{D}_{SL}$ that stores its own best-response behaviors. These two datasets are treated distinctly, corresponding to reinforcement learning and supervised learning objectives, respectively. For each agent, we train an RL policy $\pi^{RL}_{z}$ via PPO~\cite{schulman2017proximalpolicyoptimizationalgorithms} on samples from $\mathcal{D}_{RL}$. In parallel, we train an average policy $\bar{\pi}_{z}$ from $\mathcal{D}_{SL}$ with supervised learning, to imitate the agent's historical best-response behaviors. Concretely, over learning iterations $k = 1, \dots, K$, the supervised dataset is incrementally updated as
\begin{equation}
    \mathcal{D}^k_{SL} = \mathcal{D}^{k-1}_{SL} \cup \mathcal{D}^k_{RL},\forall k = 1,\dots,K.
\end{equation}
Action selection over latent space $\mathcal{Z}$ follows an mixture strategy with $\eta \in [0, 1)$.
Formally, the mixed strategy is defined as
\begin{equation}
    \sigma=\eta \cdot \bar{\pi}_z + (1 - \eta) \cdot \pi^{RL}_z,
\end{equation}
and latent actions are sampled as $z_t \sim \sigma(\cdot \mid s^{\mathrm{prop}}_t, s^{\mathrm{goal}}_t)$. Under this formulation, $\pi^{RL}_z$ represents an approximate best response to the opponents’ mixed strategies $\sigma^{-i}$, while $\bar{\pi}_z$ approximates the agent’s long-run average strategy. We summarize the pseudo algorithm of LS-NFSP in Appendix~\ref{app:stage3b}.

\subsection{Theoretical Analysis}
\label{sec:theoretical_analysis}
To analyze learning dynamics in our induced latent game, we first explicitly state a set of modeling assumptions that reflect both our hierarchical motion abstraction and the mathematical requirements for grounding equilibrium analysis in continuous strategy domains.
\begin{assumption}[Bounded Continuous Latent Space]
    \label{assump:bounded_strategy}
    Each agent's high-level policy $\pi_z$ is built on latent space $\mathcal{Z}$, where $\mathcal{Z} \subset \mathbb{R}^d$ is a compact continuous set. In our implementation, $\mathcal{Z}$ is the unit hypersphere $\mathbb{S}^{d-1}$, ensuring boundedness of the latent space.
\end{assumption}
\begin{assumption}[Well-Defined and Bounded Payoffs]
    \label{assump:bounded_payoff}
    The interaction between agents induces an effective two-player zero-sum game with payoff functions $u_i(z_i, z_{-i})$ for arbitrary $s\in\mathcal{S}$, where the expectation is taken over environment stochasticity and low-level policy execution. We assume the payoff functions are bounded and measurable over $\mathcal{S}\times\mathcal{Z}\times\mathcal{Z}$.
\end{assumption}
\begin{assumption}[Approximate Best-Response Updates]
    \label{assump:approx_br}
    At each iteration, each agent updates its latent strategy by approximately best-responding to the empirical distribution of its opponent's past strategies. The approximation error introduced by finite-horizon rollouts and stochastic optimization is assumed to be bounded.
\end{assumption}
Assumption~\ref{assump:bounded_strategy} and \ref{assump:bounded_payoff} are motivated by Glicksberg's existence theorem~\cite{glicksberg1952further}, which ensures that a mixed-strategy Nash equilibrium exists when each player’s strategy set is non-empty and compact, and the payoff functions are continuous. Beyond these structural conditions on the strategy space, the analysis of self-play dynamics further requires assumptions on the learning process itself. In our framework, high-level policies are optimized via NFSP, which alternates between computing approximate best responses and updating average strategies. We therefore abstract this optimization procedure as an approximate best-response dynamic in the induced latent game (Assumption~\ref{assump:approx_br}). Together, these assumptions establish a coherent analytical setting in which the latent game admits an equilibrium and we can derive approximate convergence results presented below.
\begin{proposition}
    \label{prop:latent_fsp}
    Under Assumptions~\ref{assump:bounded_strategy} to ~\ref{assump:approx_br}, the induced latent interaction admits the standard regularity conditions under which FSP dynamics in continuous zero-sum game are known to approach approximate Nash Equilibrium. Consequently, LS-NFSP will converge to an $\epsilon$-Nash Equilibrium of the induced game, up to bounded approximation error. 
\end{proposition}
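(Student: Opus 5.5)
The plan is to reduce the latent Markov game to a normal-form continuous zero-sum game on compact strategy sets, and then invoke known convergence results for (approximate) fictitious play in such games.

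\textbf{Step 1: the induced one-shot game.} Fix the initial-state distribution $p(s)$ and treat each player's (stationary) latent policy as a pure strategy. The strategy sets are the policy spaces $\Pi^i=\{\pi^i_z:\mathcal{O}^i\to\Delta(\mathbb{S}^{d-1})\}$. By Assumption~\ref{assump:bounded_strategy}, $\Delta(\mathbb{S}^{d-1})$ is weak-* compact, since $\mathbb{S}^{d-1}$ is a compact metric space. By Tychonoff, the product over observations is compact. The payoff is $J^1(\boldsymbol{\pi}_z)=-J^2(\boldsymbol{\pi}_z)$. Assumption~\ref{assump:bounded_payoff} together with $\gamma<1$ gives $|J^i|\le \|u\|_\infty/(1-\gamma)$. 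The discounted sum is a uniformly convergent series of terms that are multilinear in the per-step action distributions. I will argue that $J^i$ is continuous in the product topology. Each finite-horizon truncation is continuous when the transition kernel $\mathcal{P}$ and the utility $u$ are continuous in $z$, and the tail is uniformly small. This is where I must strengthen Assumption~\ref{assump:bounded_payoff} from "measurable" to "continuous in $z$". The paper already appeals to Glicksberg, which needs continuity, so I will state this explicitly as the regularity condition.

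\textbf{Step 2: existence and value.} Apply Glicksberg's theorem~\cite{glicksberg1952further} to obtain a mixed Nash equilibrium. Zero-sum structure then gives a value $v^\ast=\max\min=\min\max$, so the equilibria are exactly the pairs of max--min strategies in \eqref{eq:max_min_optimization}.

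\textbf{Step 3: approximate FP dynamics.} Model LS-NFSP as generalized weakened fictitious play. Let $\bar\pi^i_k$ be the average policy, which is the empirical mixture of the past best responses stored in $\mathcal{D}_{SL}$, and let $\pi^{RL,i}_k$ be the $\epsilon_k$-best response to $\sigma^{-i}_k$, as given by Assumption~\ref{assump:approx_br}. The averaging then satisfies $\bar\pi_{k+1}=(1-\alpha_{k})\bar\pi_k+\alpha_k\pi^{RL}_k$ with $\alpha_k=1/k$. Define the exploitability $\mathrm{Expl}(\bar\pi_k)=\max_{\pi^1}J^1(\pi^1,\bar\pi^2_k)-\min_{\pi^2}J^1(\bar\pi^1_k,\pi^2)\ge 0$. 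It is zero exactly at equilibrium. The proof then follows the Lyapunov argument of Hofbauer--Sorin (or Leslie--Collins) for continuous zero-sum games. Along the continuous-time best-response differential inclusion, $\frac{d}{dt}\mathrm{Expl}\le -\mathrm{Expl}+\bar\epsilon$, where $\bar\epsilon$ bounds the best-response error plus the perturbation from the mixing parameter $\eta$. The discrete iterates form an asymptotic pseudo-trajectory of this inclusion, since compactness and bounded payoffs make the increments bounded and $\alpha_k\to0$, $\sum\alpha_k=\infty$. Hence $\limsup_k \mathrm{Expl}(\bar\pi_k)\le C\bar\epsilon$. This is exactly an $\epsilon$-Nash equilibrium with $\epsilon=C\bar\epsilon$ in the sense of \eqref{eq:nash_equilibrium}.

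\textbf{Main obstacle.} The hard step is Step 3. The classical Lyapunov argument is stated for normal-form games with mixed strategies over a fixed action set. Here the "mixing" is over behavioral policies in a Markov game. Averaging policies in $\mathcal{D}_{SL}$ mixes per-state action distributions, and this does not correspond to mixing the induced payoffs, because $J$ is not linear in behavioral policies. My first attempt will be to work with realization-equivalent mixed strategies: interpret $\bar\pi_k$ as the uniform mixture over whole past policies, as in NFSP with reservoir sampling, so that $J$ is bilinear in the mixing weights and the argument applies on the compact convex set of measures over policies. The SL approximation error is then absorbed into $\bar\epsilon$. If this identification fails, I will restrict the claim to the induced normal-form game over policies, which is what "the induced game" in the statement refers to.
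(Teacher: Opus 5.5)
There is a genuine gap, in the two steps your argument leans on most.

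\textbf{Step 1 does not deliver the compactness and continuity that Steps 2--3 need.} The Tychonoff product $\prod_{o\in\mathcal{O}^i}\Delta(\mathbb{S}^{d-1})$ is compact. But here $\mathcal{O}^i$ is a continuum, and on that product $J^i$ fails in two ways. It is not defined everywhere: the product contains non-measurable maps $o\mapsto\pi(\cdot\mid o)$, and the measurable policies are dense in it but not closed. It is also not continuous on the measurable part. Concretely, index a net by finite sets $F\subset\mathcal{O}^i$ and let $\pi_F$ agree with $\pi$ on $F$ and with $\pi'$ off $F$. Then $\pi_F\to\pi$ in the product topology, yet $J^i(\pi_F)=J^i(\pi')$ whenever observations have atomless laws, so $J^i$ is discontinuous at $\pi$ as soon as $J^i(\pi)\neq J^i(\pi')$. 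Your justification (finite-horizon truncations are continuous when $u$ and $\mathcal{P}$ are continuous in $z$) gives continuity in the action, not in the policy under pointwise convergence; it can at best work for discrete observation spaces. Hence Glicksberg does not apply as you set it up. Step 3 also loses the compact metric setting and closed-graph best-response map that the asymptotic-pseudo-trajectory machinery needs. The paper never forms a policy-level game. It collapses to the one-shot latent game $u:\mathcal{Z}\times\mathcal{Z}\to\mathbb{R}$ and works in $\Delta(\mathbb{S}^{d-1})$, which is weak-$*$ compact metrizable, with a bilinear mixed extension that is continuous when $u$ is. A faithful policy-level version would need a narrow (Young-measure) topology on behavioral policies plus, for example, transition densities continuous in the action.

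\textbf{The Lyapunov inequality in Step 3 is false for approximate best responses.} Consider $\dot x=\tilde x-x$, $\dot y=\tilde y-y$ with $\tilde x\in\mathrm{BR}^{\epsilon}(y)$, $\tilde y\in\mathrm{BR}^{\epsilon}(x)$. Danskin's theorem gives, for a.e.\ $t$, $\dot V=-V+u(\hat x,\tilde y)-u(\tilde x,\hat y)$, where $\hat x\in\mathrm{BR}(y)$ and $\hat y\in\mathrm{BR}(x)$ are exact. The Hofbauer--Sorin identity $\dot V=-V$ comes from choosing $\hat x=\tilde x$, $\hat y=\tilde y$. With $\epsilon$-responses the remainder compares responses against different opponents and is not $O(\epsilon)$. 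In matching pennies, $u=(2p-1)(2q-1)$, take any point with $0<2p-1\le\epsilon/2$ and $0<2q-1\le\epsilon/2$. There the choice $\tilde p=\tilde q=1$ is $\epsilon$-optimal for both players and gives $\dot V=2-V$.

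What survives is either convergence with vanishing errors (generalized weakened fictitious play), or, for fixed errors, $\limsup_k\mathrm{Expl}(\bar\pi_k)\le\rho(\bar\epsilon)$ with $\rho(\bar\epsilon)\to0$. The latter comes from a robustness-of-attractors argument that again requires the compact setting above. Neither gives a linear bound read off a pointwise differential inequality. The paper's route avoids dynamics entirely. It uses the bilinearity bound $\max_{z_1}u(z_1,\bar\nu_T)-u(\bar\mu_T,\bar\nu_T)\le R_1(T)/T$, which holds for arbitrary sequences and needs no topology. Best-response errors then enter additively through a regret bound the paper takes as given.

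Finally, your realization-equivalence fix for the behavioral-averaging obstacle is unavailable here, for three reasons. Policies condition only on the current observation, so there is no perfect recall and Kuhn's theorem does not apply. Algorithm~\ref{alg:ls_nfsp} mixes $\bar\pi_z$ and $\pi^{RL}_z$ per time step. And $\bar\pi_z$ is fit by MSE. So your fallback, positing fictitious play directly in a normal-form game, is an assumption rather than a derivation; that is in effect what Assumption~\ref{assump:approx_br} and the paper's proof do.
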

\begin{proof}
    See~\autoref{app:proof_of_latent_fsp}.
\end{proof} 

\section{Experiment}
Policy training is performed in Isaac Lab~\cite{nvidia2025isaaclabgpuacceleratedsimulation} on the NVIDIA Omniverse platform, leveraging large-scale parallel simulation for efficient data collection. The simulated platform is the Unitree G1 humanoid~\cite{unitreeg1}, featuring 29 degrees of freedom. All experiments are trained on a single NVIDIA RTX 4090 GPU. To improve robustness and encourage the emergence of transferable tactical behaviors, we apply domain randomization~\cite{tobin2017domainrandomizationtransferringdeep} to contact friction, link masses, and actuator gains. Our evaluation is designed to answer the following questions: \textit{(i) does the proposed method achieve superior converged performance compared to relevant baselines?} and \textit{(ii) do the introduced components---including the behavioral warm-up, and AMP---contribute positively to overall performance?}


\begin{table}[t]
\centering
\caption{Win Rate (\%) of Cross-Playing. Each item the win rate of the row agent against the column agent, averaged over 20 bouts.}
\label{tab:win_rate_matrix}
\setlength{\tabcolsep}{3.5pt} 

\resizebox{\columnwidth}{!}{ 
\begin{tabular}{l|cccccccc}
\toprule
\textbf{Agent \textbackslash \ Opp} & \textbf{(1)} & \textbf{(2)} & \textbf{(3)} & \textbf{(4)} & \textbf{(5)} & \textbf{(6)} & \textbf{(7)} & \textbf{(8)} \\ \midrule
(1) \textbf{Ours LS-NFSP}   & - & \textbf{68.52} & \textbf{76.24} & \textbf{82.41} & \textbf{84.47} & \textbf{95.38} & \textbf{98.50} & \textbf{100.0} \\
(2) Fictitious SP            & - & - & 62.35 & 71.84 & 75.16 & 92.11 & 96.18 & 99.42 \\
(3) Naive SP             & - & - & - & 68.45 & 70.38 & 88.61 & 94.52 & 98.53 \\
(4) LS-NFSP w/o AMP                & - & - & - & - & 62.12 & 82.14 & 88.37 & 95.18 \\
(5) PPO-Only              & - & - & - & - & - & 45.24 & 77.86 & 82.11 \\
(6) Static-Target Specialist          & - & - & - & - & - & - & 60.33 & 85.58 \\
(7) Naive SP w/o Warmup             & - & - & - & - & - & - & - & 94.17 \\
(8) 29Dof Action-Space SP           & - & - & - & - & - & - & - & - \\ \bottomrule
\end{tabular}
}
\end{table}

\begin{table}[t]
\centering
\caption{Comparison between Action-Space (29-DOF) vs. Latent-Space (Ours) on Tactical and Physical proficiency.}
\label{tab:core_comparison}
\setlength{\tabcolsep}{3.5pt} 
\renewcommand{\arraystretch}{1.2}
\resizebox{\columnwidth}{!}{ 
\begin{tabular}{lcccc}
\toprule
\textbf{Metric} & \textbf{29Dof Action-Space SP} & \textbf{Ours LS-NFSP} \\ \midrule
\rowcolor[HTML]{F2F2F2}\textbf{(a)Tactical Metrics} & & & \\
Offensive Landing Rate ($\eta_{hit}$) $\uparrow$ & $0.142 \pm 0.05$ & $\mathbf{0.685 \pm 0.03}$ \\
Engagement Rate ($ER$) $\uparrow$   & $0.315 \pm 0.08$ & $\mathbf{0.824 \pm 0.02}$ \\ \hline
\rowcolor[HTML]{F2F2F2}\textbf{(b)Physical Metrics} & & & \\
Base Stability ($BOS$) $\uparrow$ & $0.418 \pm 0.12$ & $\mathbf{0.942 \pm 0.01}$ \\
Torque Smoothness ($TS_{\tau}$) $\downarrow$  & $7.452 \pm 1.211$ & $\mathbf{0.930 \pm 0.150}$ \\ \bottomrule
\end{tabular}
}
\end{table}
\subsection{Evaluation Metrics}
\label{sec:metrics}
To answer the above question, we propose some metrics to quantify the evaluation of LS-NFSP and baselines from three primary dimensions, which is motivated by some principles in professional boxing training and robust robotics control, as listed as follows.


\paragraph{Tactical Proficiency.}
The tactical proficiency concentrates on evaluating the offensive precision and active engagement capabilities. Thus, we introduces two metrics as \textit{Offensive Landing Rate} ($\eta_{\mathrm{hit}}$) and \textit{Engagement Rate} ($ER$). $\eta_{\mathrm{hit}}$ is identified by the proportion of contacts exceeding the force threshold to the total number of offensive attempts.
$ER$ acts as a rigorous measure of an agent's ability to maintain a viable combat posture, effectively penalizing tactical avoidance or non-confrontational movement patterns, the higher the better, and it is calculated as the proportion of the episode duration in which the agent concurrently satisfies two spatial conditions: (1) keep a distance within an effective striking range and (2) ensuring a facing alignment that exceeds a predefined threshold.

\paragraph{Physical Stability.}
Physical stability is assessed through \textit{Base Orientation Stability} ($BOS$) and \textit{Torque Smoothness} ($TS_{\tau}$). To ensure a positive correlation with control quality, $BOS$ is defined using an exponential kernels on the angular deviation: $BOS = \mathbb{E} \left[ \exp(-\| \mathbf{g}_{\mathrm{base}} - \mathbf{g}_{\mathrm{world}} \|^2) \right]$, where $\mathbf{g}_{\mathrm{base}}$ and $\mathbf{g}_{\mathrm{world}}$ the gravity vector in the robot's local frame and the global frame, respectively. A higher $BOS$ indicates a superior ability to maintain an upright posture under competitive perturbations. $TS_{\tau}$ measures the magnitude of high-frequency oscillations in motor commands, calculated as the mean absolute change in joint torques across consecutive control steps: $TS_{\tau} = \mathbb{E} \left[ \| \tau_{t} - \tau_{t-1} \| \right]$. A lower $TS_{\tau}$ value represents more hardware-friendly execution and reduced mechanical wear.

\paragraph{Stylistic Authenticity.}
The Stylistic Authenticity of the emergent behaviors is evaluated via qualitative visual inspection. Instead of relying on a singular numerical score, we provide a comparative analysis of simulation snapshots to assess the human-likeness of the motions. This includes identifying key boxing maneuvers such as slips, counters, and rhythmic footwork, while ensuring the absence of unnatural joint configurations or physically unsustainable postures that often characterize non-constrained reinforcement learning policies.




\begin{table}[t]
\centering
\caption{Tactical performance comparison focusing on Offensive Landing Rate and Engagement Rate. All metrics are evaluated against the Naive Self-Play (Latent). LS-NFSP outperforms all the baselines.}
\label{tab:tactical_results}
\setlength{\tabcolsep}{3.5pt} 
\renewcommand{\arraystretch}{1.2}
\resizebox{\columnwidth}{!}{ 
\begin{tabular}{lcc}
\toprule
\textbf{Methods} & \textbf{Offensive Landing Rate} $\uparrow$ & \textbf{Engagement Rate} $\uparrow$ \\
\midrule
\rowcolor[HTML]{F2F2F2} \textbf{(a) Comparison with Strategic Algorithms} & & \\
PPO-Only & $0.231 \pm 0.03$ & $0.495 \pm 0.02$ \\
Naive SP & $0.350 \pm 0.04$ & $0.580 \pm 0.05$ \\
Fictitious SP & $0.420 \pm 0.03$ & $0.650 \pm 0.04$ \\
\textbf{Ours LS-NFSP} & $\mathbf{0.685 \pm 0.03}$ & $\mathbf{0.824 \pm 0.02}$ \\
\midrule
\rowcolor[HTML]{F2F2F2} \textbf{(b) Impact of Latent Space} & & \\
29Dof Action-Space SP & $0.142 \pm 0.05$ & $0.315 \pm 0.08$ \\
\textbf{Ours LS-NFSP} & $\mathbf{0.685 \pm 0.03}$ & $\mathbf{0.824 \pm 0.02}$ \\
\midrule
\rowcolor[HTML]{F2F2F2} \textbf{(c) Impact of Curriculum Design} & & \\
Static-Target Specialist & $0.210 \pm 0.04$ & $0.450 \pm 0.06$ \\
SP w/o Warmup & $0.050 \pm 0.02$ & $0.120 \pm 0.05$ \\
\textbf{Ours LS-NFSP} & $\mathbf{0.685 \pm 0.03}$ & $\mathbf{0.824 \pm 0.02}$ \\
\midrule
\rowcolor[HTML]{F2F2F2} \textbf{(d) Impact of Stylistic Constraints} & & \\
LS-NFSP w/o AMP & $0.490 \pm 0.03$ & $0.720 \pm 0.05$ \\
\textbf{Ours LS-NFSP} & $\mathbf{0.685 \pm 0.03}$ & $\mathbf{0.824 \pm 0.02}$ \\
\bottomrule
\end{tabular}
}
\end{table}

\subsection{Cross-Playing Evaluation}
\label{sec:cross_play}
We answer the first question by conducting cross-play between LS-NFSP and involved baselines described in~\autoref{app:exp}. \autoref{tab:win_rate_matrix} presents the win rates over all the involved methods, which shows that LS-NFSP outperforms all the baselines. The most critical insight is the superiority of performing competitive self-play within a structured latent manifold. We notice that the LS-NFSP agent dominates 29Dof Action-Space SP agent with a 100.00\% win rate and also performs superior tactical and physical proficiency (\autoref{tab:core_comparison}), proving that decoupling balance maintenance from tactical exploration is a fundamental prerequisite for competitive learning in high-dimensional humanoid combat. And for the comparison between self-play methods with latent space, LS-NFSP also outperforms the other methods, in the result of Fictitious SP of 68.52\% and Naive SP of 76.24\%, confirming that the integration of NFSP effectively mitigates policy cycling to produce a more robust best response. The necessity of each training stage is further validated by the substantial performance gaps against Direct-SP of 98.50\% and Static-Target Specialist of 95.38\%, which fail to respectively overcome reward sparsity and reactive dynamic opponents. Additionally, the No-AMP variant exhibits reduced tactical efficiency of 82.41\% win rate against Ours due to its non-standardized movement patterns. Finally, the significant win rate against the PPO-Only baseline of 84.47\% underscores that competitive interaction is the indispensable driver of emergent combat behaviors.

\subsection{Ablation Study}
\label{sec:comparative_analysis}
To answer the second question, we conduct ablation study from two perspectives, i.e., whether the strategy prior, warm-up and AMP can benefit the policy training.
\begin{figure}
    \centering
    \vspace{-2mm}
    \includegraphics[width=\columnwidth]{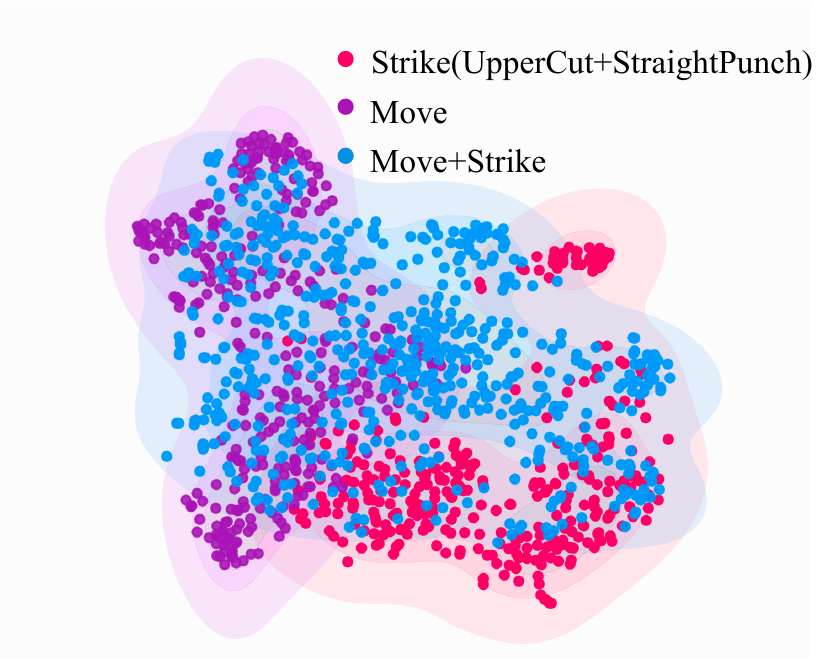}
    \caption{t-SNE visualization of the 32-dimensional latent manifold, illustrating the structured semantic clustering of combat primitives and their topological relationships that facilitate stable, compositional behavioral transitions in the hierarchical LS-NFSP framework.}
    \vspace{-3mm}
    \label{fig:tsne_manifold}
\end{figure}

\paragraph{Strategy Priors.}
\autoref{tab:tactical_results}~(a) underscores the necessity of both competitive strategy priors in achieving elite boxing proficiency. LS-NFSP demonstrates superior tactical performance by achieving $\eta_{\mathrm{hit}}=0.685$ and $ER=0.824$, which significantly outperforms baselines.

\paragraph{Warm-Up Training.}
As shown in \autoref{tab:tactical_results}(c), the necessity of leveraging warm-up is further evidenced by the collapse of the Direct-SP variant. Despite competitive access, it achieves a negligible $\eta_{\mathrm{hit}}=0.050$, proving its inability to overcome the inefficiency due to reward sparsity settings.

\paragraph{Adversarial Motion Prior.}
We further evaluate the impact of stylistic constraints rigorously as in \autoref{tab:tactical_results}(d). While the variant achieves $ER=0.720$, its $\eta_{\mathrm{hit}}$ drops sharply to 0.490. This discrepancy reveals that, without AMP, the agent can approach the opponent but executes erratic strikes that difficult to contact opponent effectively, as \autoref{fig:mujoco_vis} demonstrates that the policy trained without AMP lacks physical authenticity. Ultimately, these results prove that the synergy of NFSP-driven exploration and AMP-guided movement is essential for producing a controller that is both strategically dominant and physically authentic.

\begin{figure}
    \centering
    \includegraphics[width=\columnwidth]{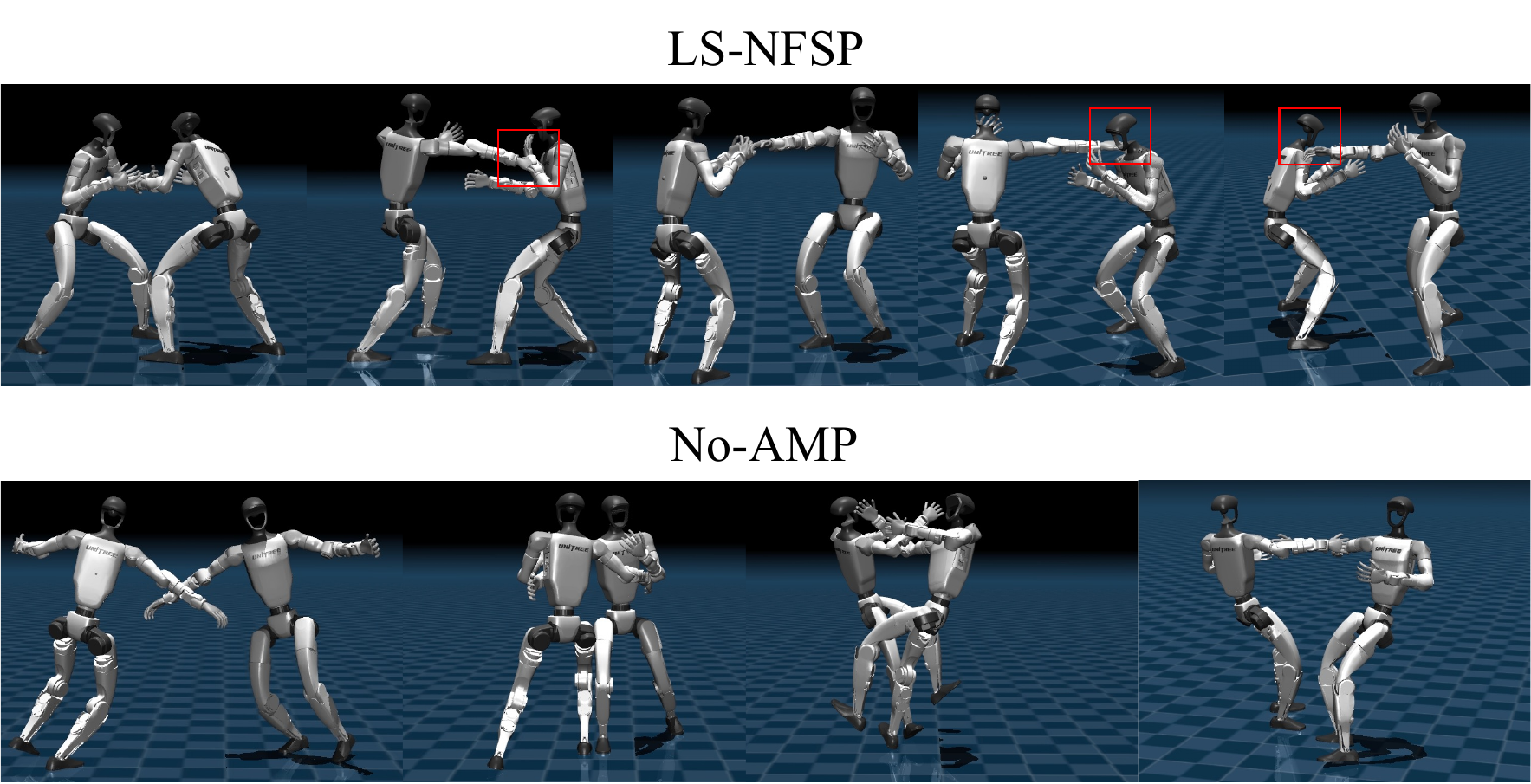}
    \caption{Mujoco visualization of the LS-NFSP and the one without AMP.}
    \vspace{-6mm}
    \label{fig:mujoco_vis}
\end{figure}

\subsection{Analysis of the Learned Latent Space}
\label{sec:tsne_analysis}
To investigate the underlying structure of the latent skills, we perform a joint t-SNE~\cite{maaten2008visualizing} analysis on the generated embeddings, as illustrated in \autoref{fig:tsne_manifold}. The visualization reveals a highly structured manifold with a distinct topological arrangement: there is a clear separation between the pure \textit{Move} (purple) and pure \textit{Strike} (pink) clusters. This demarcation indicates that the latent space successfully disentangles stable locomotion dynamics from high-momentum offensive maneuvers. Crucially, \textit{Move+Strike} (blue) embeddings occupy the intermediate space, exhibiting significant overlap with both \textit{Move} and \textit{Strike} regions. This dual overlap suggests that the \textit{Move+Strike} mode functions as a topological bridge, seamlessly connecting the two distinct primitives. Such a structure provides empirical evidence of the latent space's compositional nature, enabling the LS-NFSP framework to smoothly interpolate between navigation and combat. This allows the agent to initiate attacks dynamically while moving, without suffering from the discontinuity or instability typically associated with switching between disjoint control modes.
\section{Conclusion}
We propose a hierarchical competitive framework for humanoid interaction that integrates latent-space control with Neural Fictitious Self-Play (LS-NFSP). By decoupling high-level tactical strategy from low-level balance control via a pre-trained latent manifold, our approach ensures physical stability while enabling the emergence of sophisticated boxing maneuvers like slip-and-counter tactics. Experimental results from an $8 \times 8$ cross-play tournament, supported by qualitative snapshots and t-SNE analysis, demonstrate that LS-NFSP consistently outperforms baselines in both competitive win rate and stylistic authenticity. Ultimately, this work proves that the synergy of hierarchical latent representations and competitive self-play provides a robust and scalable solution for achieving tactical intelligence in high-dimensional humanoid robotics.


\bibliography{example_paper}
\bibliographystyle{icml2026}

\newpage
\appendix
\onecolumn
\section{Learning an Expert Motion Tracker}
\label{app:stage1}
This section supplements Section~\ref{sec:stage1} with implementation details for learning an expert motion tracker, including motion dataset collection and processing, as well as the reinforcement learning setup used to train the tracking policy.

\subsection{Motion Dataset Collection and Processing}
We collect a diverse set of human boxing motions recorded by professional boxers using an Xsens motion capture system~\cite{xsens_mvn}. The dataset consists of 46 original motion clips with a total duration of approximately 14 minutes, captured at a frequency of 50 Hz. These clips cover a wide range of boxing behaviors, including offensive strikes, defensive maneuvers, footwork, and transitional movements.
To increase data diversity and improve generalization, we augment the dataset through left-right mirroring, effectively doubling the number of available motion sequences.

The captured human motions are retargeted to the Unitree G1 humanoid~\cite{unitreeg1} morphology following the Generalized Motion Retargeting (GMR) framework~\cite{araujo2025retargetingmattersgeneralmotion}. Through this retargeting process, we obtain physically realizable reference trajectories dataset $\mathcal{D}_{\mathrm{motion}}$ that preserve the stylistic characteristics and temporal structure of the original boxing motions.

\subsection{Reinforcement Learning Settings and Training Details}
\paragraph{State, Observation and Action Spaces.}
The state of the tracking policy is defined as
$s_t^{\mathrm{self}} = (s_t^{\mathrm{prop}}, s_t^{\mathrm{priv}})$,
where $s_t^{\mathrm{prop}}$ denotes the proprioceptive state, including base angular velocities, relative joint positions, and joint velocities, and $s_t^{\mathrm{priv}}$ denotes privileged information, including base linear velocities, global base position and orientation, body link poses, and contact states.
The observation $\mathbf{o}_{\mathrm{trk}} \in \mathcal{O}$ consists of three components:
(i) the proprioceptive state $s_t^{\mathrm{prop}}$;
(ii) the reference motion state $s_t^{\mathrm{ref}} \sim \tau^{\mathrm{ref}}$, which provides target full-body poses and base velocities over the current and the next $K=3$ timesteps; and
(iii) the previous action $a_{t-1}$ to promote temporal smoothness.
The action $a_t \in \mathcal{A}$ corresponds to target joint positions for the 29-degree-of-freedom Unitree G1 humanoid~\cite{unitreeg1}, which are tracked by a low-level PD controller.

\paragraph{Reward Function.}
The tracking reward $r_{\mathrm{trk}}$ is designed to encourage accurate imitation of reference boxing motions and is defined as a weighted sum of exponential tracking terms:
\begin{align*}
   r_{\mathrm{trk}}(s_t^{\mathrm{self}}, a_t;s_t^{\mathrm{ref}}) &= w_{\mathrm{p}} \cdot r_{\mathrm p}(s^{\mathrm{self}}_{t}, s^{\mathrm{ref}}_{t} ) + w_{\mathrm o} \cdot r_{\mathrm o}(s^{\mathrm{self}}_{t}, s^{\mathrm{ref}}_{t} ) + w_{\mathrm{bp}} \cdot r_{\mathrm{bp}}(s^{\mathrm{self}}_{t}, s^{\mathrm{ref}}_{t} ) \\\nonumber
    & + w_{\mathrm{bo}} \cdot r_{\mathrm{bo}}(s^{\mathrm{self}}_{t}, s^{\mathrm{ref}}_{t} ) + w_{\mathrm{lv}} \cdot r_{\mathrm{lv}}(s^{\mathrm{self}}_{t}, s^{\mathrm{ref}}_{t} ) + w_{\mathrm{av}} \cdot r_{\mathrm{av}}(s^{\mathrm{self}}_{t}, s^{\mathrm{ref}}_{t} ) - r_{\mathrm{reg}}(s^{\mathrm{self}}_t, a_t),
\end{align*}
where $\mathcal{X}={\mathrm{p,o,bp,bo,lv,av}}$ and each term is defined as
$
r_x = \exp(-\|\mathrm{err}_x\|^2 / \sigma_x^2).
$
Here, $r_{\mathrm p}$ and $r_{\mathrm o}$ measure global root position and orientation tracking errors, $r_{\mathrm{bp}}$ and $r_{\mathrm{bo}}$ measure relative body link pose errors, and $r_{\mathrm{lv}}$ and $r_{\mathrm{av}}$ penalize deviations in linear and angular velocities. The regularization term $r_{\mathrm{reg}}$ penalizes excessive action rate, joint limit violations, and undesired self-collisions or ground contacts. The weights $w_x$ and scaling factors $\sigma_x$ are provided in Table \ref{tab:trackingreward}.

\begin{table}[ht]
\centering
\caption{Reward Terms, Weights and Scaling Factor for Motion Tracker.}
\label{tab:trackingreward}
\begin{tabular}{lcc}\toprule
\textbf{Reward Terms}                               & \textbf{Weights} $w$ & \textbf{Scaling Factor} $\sigma$ \\ \midrule
Root Position $r_\mathrm{p}$          & $0.5$       & $0.3$                   \\
Root Orientation $r_\mathrm{o}$       & $0.5$       & $0.4$                   \\
Body Position $r_\mathrm{bp}$         & $1.0$       & $0.3$                   \\
Body Orientation $r_\mathrm{bo}$      & $1.0$       & $0.4$                   \\
Body Linear Velocity $r_\mathrm{lv}$  & $1.0$       & $1.0$                   \\
Body Angular Velocity $r_\mathrm{av}$ & $1.0$       & $3.14$                 \\ \bottomrule
\end{tabular}
\end{table}

\paragraph{Early Termination.}
To accelerate training and prevent the policy from exploring unstable or physically unfeasible states, we follow ~\cite{peng2018deepmimic} to implement an early termination mechanism $\mathcal{T}$. An episode is terminated if any of the following conditions are met: 1). \textbf{Pose Collapse:} The vertical distance between the robot's base and its reference height falls below a threshold of $0.25\,\text{m}$, or the base orientation deviation exceeds $0.8$ rad. 2). \textbf{End-Effector Violation:} The vertical position of key end-effectors, specifically the ankles and wrists, deviates from the reference height by more than $0.25\,\text{m}$. 3).\textbf{Temporal Limit:} The maximum episode duration of $10.0$ seconds is reached.

By combining structured reward shaping, early termination, and massive-scale parallel simulation in Isaac Lab~\cite{nvidia2025isaaclabgpuacceleratedsimulation}, the low-level tracking policy $\pi_{\mathrm{trk}}$ learns robust and physically consistent motor skills, which form the foundation for subsequent latent skill distillation.

\section{Encoding Motion via Topological Latent Distillation}
\label{app:stage2}
This section provides implementation details for the latent motion distillation framework introduced in Section~\ref{sec:stage2}


\paragraph{Distillation Architecture.}
We adopt a student–teacher paradigm to distill the low-level tracking policy $\pi_{\mathrm{trk}}$ into a compact latent interface. During training, the teacher policy is fixed, and a student model consisting of an encoder $E_\phi$, a decoder $D_\psi$, and a state-conditioned prior network $P_\xi$ is jointly optimized.
At each timestep, the encoder maps privileged observations $o_t=(s_t^{\mathrm{self}}, s_t^{\mathrm{ref}})$ to a latent distribution $q_\phi(z_t\mid o_t)$. A latent code $z_t$ is sampled and provided to the decoder, which predicts target joint actions conditioned on the proprioceptive state.



\paragraph{Training Objective.} 
The student is trained to reconstruct the teacher’s actions while learning a structured latent distribution. The reconstruction loss is defined as
\begin{equation*}
\mathcal{L}_{\mathrm{rec}} = \mathbb{E}_{s^{\mathrm{ref}}_t \sim \tau^{\mathrm{ref}}, \tau^{\mathrm{ref}}\sim \mathcal{D}_{\mathrm{motion}}}\left[\| a_t - \hat a_t\|^2\right],
\end{equation*}
where $a_t \sim \pi_{\mathrm{trk}}(\cdot \mid o_t)$, $z_t \sim E_\phi(\cdot \mid o_t)$, $\hat a_t \sim D_\psi(s_t^{\mathrm{prop}}, z_t)$ . To regularize the latent space, we impose a KL divergence between the encoder posterior and the state-conditioned prior:

\begin{equation*}
\mathcal{L}_{\mathrm{prior}} = \mathbb{E}_{s^{\mathrm{ref}}_t \sim \tau^{\mathrm{ref}},\tau^{\mathrm{ref}} \sim \mathcal{D}_{\mathrm{motion}}}\left[D_{\mathrm{KL}}\left(E_\phi(z_t\mid o_t)\|P_\xi(z_t\mid s_t^{\mathrm{prop}})\right)\right].
\end{equation*}
The prior loss ensures that the prior network captures the semantic structure of the motion primitives. During the competitive stage in self-play, this learned prior provides a critical reference for the high-level policy, where a KL penalty is applied to prevent the strategy from deviating into physically unstable regions of the action space.
The total loss is given by
\begin{equation*}
\mathcal{L}_{\mathrm{distill}} = \mathcal{L}_{\mathrm{rec}} + \lambda_{\mathrm{prior}} \mathcal{L}_{\mathrm{prior}}.
\end{equation*}
where $\lambda_{prior}=0.001$. This dual optimization ensures that the student not only recovers the motor control expertise of the teacher but also constructs a structured, prior-conditioned manifold ready for competitive co-evolution.



\section{Behavioral Warmup with Adversarial Priors}
\label{app:stage3a}

This section gives implementation details for the behavioral wamup training. We also describe how Adversarial Motion Priors (AMP) are incorporated to regularize behaviors toward the motion-capture distribution and preserve human-like boxing style.


\subsection{Goal-Oriented Observation for Autonomous Boxing}

Different from previous tracking-based task settings, the observation of the autonomous boxing task does not contain any explicit motion commands. Instead, the residual policy $\pi_{\theta}$ relies on goal-oriented spatial information to guide decision-making. The task observation $s^{\mathrm{goal}}$ consists of two complementary components: (1) offensive target information and (2) defensive threat information.

\paragraph{Offensive Target Observation.}
The offensive target observation encodes the relative positions of the ego agent’s left and right fists with respect to the opponent’s torso, represented in the ego-centric coordinate frame. Let $\mathbf{p}_l^{\mathrm{ego}}, \mathbf{p}_r^{\mathrm{ego}} \in \mathbb{R}^3$ denote the world-frame positions of the ego agent’s wrists, $\mathbf{p}_t^{\mathrm{opp}} \in \mathbb{R}^3$ the position of the opponent’s torso, and $\mathbf{q}^{\mathrm{ego}}$ the ego root orientation. The relative vectors are defined as

\begin{equation}
\mathbf{v}_{l}^{\mathrm{off}} = \mathbf{p}_l^{\mathrm{ego}} - \mathbf{p}_t^{\mathrm{opp}}, \quad
\mathbf{v}_{r}^{\mathrm{off}} = \mathbf{p}_r^{\mathrm{ego}} - \mathbf{p}_t^{\mathrm{opp}},
\end{equation}

which are further transformed into the ego-centric frame via

\begin{equation}
\tilde{\mathbf{v}}^{\mathrm{off}} = \mathbf{R}(\mathbf{q}^{\mathrm{ego}})^{-1} \mathbf{r}^{\mathrm{off}},
\end{equation}

where $\mathbf{R}(\cdot)$ denotes the rotation matrix induced by the quaternion.

This transformation removes global orientation dependency and ensures rotational invariance of the observation. As a result, the offensive component provides direct and physically interpretable geometric information for guiding accurate, distance-aware striking behaviors.

\paragraph{Defensive Target Observation.}
The defensive target observation characterizes the relative positions of the opponent’s left and right fists with respect to the ego agent’s torso, also expressed in the ego-centric frame. Let $\mathbf{p}_l^{\mathrm{opp}}, \mathbf{p}_r^{\mathrm{opp}} \in \mathbb{R}^3$ denote the opponent’s wrist positions and $\mathbf{p}_t^{\mathrm{ego}} \in \mathbb{R}^3$ the ego torso position. The defensive vectors are computed as

\begin{equation}
\mathbf{v}_{l}^{\mathrm{def}} = \mathbf{p}_l^{\mathrm{opp}} - \mathbf{p}_t^{\mathrm{ego}}, \quad
\mathbf{v}_{r}^{\mathrm{def}} = \mathbf{p}_r^{\mathrm{opp}} - \mathbf{p}_t^{\mathrm{ego}},
\end{equation}

and rotated into the local frame in the same manner:

\begin{equation}
\tilde{\mathbf{v}}^{\mathrm{def}} = \mathbf{R}(\mathbf{q}^{\mathrm{ego}})^{-1} \mathbf{v}^{\mathrm{def}}.
\end{equation}
This component enables the policy to perceive incoming attack trajectories independently of global pose, facilitating anticipatory defense, evasive maneuvers, and counter-attacking strategies. 

Together, the offensive and defensive observations form a compact yet expressive goal representation $s^{\mathrm{goal}} = (\tilde{\mathbf{v}}^{\mathrm{off}}_l, \tilde{\mathbf{v}}^{\mathrm{off}}_r,\tilde{\mathbf{v}}^{\mathrm{def}}_l, \tilde{\mathbf{v}}^{\mathrm{def}}_r)$ that supports coordinated attack--defense behaviors in autonomous humanoid boxing.

\subsection{Adversarial Motion Prior}
To preserve human-like motion quality, we follow ~\cite{peng2018deepmimic} to use a discriminator $C$ to distinguish whether the state originates from mocap dataset $\mathcal{D}_{\mathrm{motion}}$ or the agent formed by the policy $\pi_z$ and the frozen decoder $D_\psi$. Specifically, we first construct the AMP observation ${o_t}^{\mathrm{disc}}$ to better capture the style of the mocap data. The observation ${o_t}^{{\mathrm{disc}}}$ includes the local rotation and velocity of each joint, along with the angular velocities of the base. Observation transitions from the agent and those sampled from the mocap data are fed separately to the discriminator. We label the transitions from the mocap data as \textit{real} and those from the agent as \textit{fake}. The goal of the discriminator is to classify these inputs correctly: when mocap transitions are provided, the output should approach 1, whereas transitions from the agent should produce outputs closer to -1. As a result, the objective function can be formulated as follows:
\begin{equation*}
    \begin{aligned}
        \arg \min_C & \ \mathbb{E}_{d^{\mathcal{D}_{\mathrm{motion}}}({o_t}^{\mathrm{disc}},{o_{t+1}^{\mathrm{disc}}})} \left[ \left( C({o_t}^{\mathrm{disc}},{o_{t+1}^{\mathrm{disc}}}) - 1 \right)^2 \right] \\
        & + \mathbb{E}_{d^{\pi_z, D_\psi}({o_t}^{\mathrm{disc}},{o_{t+1}^{\mathrm{disc}}})} \left[ \left( C({o_t}^{\mathrm{disc}},{o_{t+1}^{\mathrm{disc}}}) + 1 \right)^2 \right].
    \end{aligned}
\end{equation*}

Since discriminators often suffer from mode collapse, we follow AMP~\cite{peng2018deepmimic} by applying a gradient penalty to the mocap data transitions to ensure stability. This can be formulated as follows:
\begin{equation*}
    \arg \min_C  \ \mathbb{E}_{d^{\mathcal{D}_{\mathrm{motion}}}({o_t}^{\mathrm{disc}},{o_{t+1}^{\mathrm{disc}}})} \left[ \left\lVert \triangledown C({o_t}^{\mathrm{disc}},{o_{t+1}^{\mathrm{disc}}}) \right\rVert^2 \right].
\end{equation*}

The final objective function to train the discriminator is formulated as follows:
\begin{equation*}
    \begin{aligned}
        \arg \min_C & \ \mathbb{E}_{d^{\mathcal{D}_{\mathrm{motion}}}({o_t}^{\mathrm{disc}},{o_{t+1}^{\mathrm{disc}}})} \left[ \left( C({o_t}^{\mathrm{disc}},{o_{t+1}^{\mathrm{disc}}}) - 1 \right)^2 \right] \\
        & + \mathbb{E}_{d^{\pi_z, D_\psi}({o_t}^{\mathrm{disc}},{o_{t+1}^{\mathrm{disc}}})} \left[ \left( C({o_t}^{\mathrm{disc}},{o_{t+1}^{\mathrm{disc}}}) + 1 \right)^2 \right] \\
        & + w_{gp}\mathbb{E}_{d^{\mathcal{D}_{\mathrm{motion}}}({o_t}^{\mathrm{disc}},{o_{t+1}^{\mathrm{disc}}})} \left[ \left\lVert \triangledown C({o_t}^{\mathrm{disc}},{o_{t+1}^{\mathrm{disc}}}) \right\rVert^2 \right],
    \end{aligned}
\end{equation*}

\subsection{Reward Formulation}
\label{sec:reward}

To encourage effective engagement, active striking, and robust defensive behaviors in humanoid combat, we design a hierarchical and compositional reward function. The overall reward integrates geometric alignment, locomotion, contact-based interaction, and terminal dominance, enabling stable learning under self-play settings.

\paragraph{Overall Objective.}
The warm-up reward is defined as a weighted combination of task-oriented and style-oriented objectives:
\begin{equation*}
r_{\mathrm{warmup}}
=
w_{\mathrm{task}} r_{\mathrm{task}}
+
w_{\mathrm{style}} r_{\mathrm{style}},
\end{equation*}
where $r_{\mathrm{style}}$ encourages motion naturalness, and $r_{\mathrm{task}}$ focuses on combat effectiveness.

\paragraph{Task-Level Decomposition.}
The task reward is formulated as
\begin{equation*}
\begin{aligned}
r_{\mathrm{task}} =
&\; w_{\mathrm{face}} r_{\mathrm{face}}
+ w_{\mathrm{vel}} r_{\mathrm{vel}} \\
&+ w_{\mathrm{dist}} r_{\mathrm{dist}}
+ w_{\mathrm{hit}} r_{\mathrm{hit}} \\
\end{aligned}
\end{equation*}
where each component captures a distinct aspect of humanoid combat behavior.

\paragraph{Hierarchical Design.}
The reward structure follows a layered principle:
(i) geometric alignment and distance shaping,
(ii) velocity-oriented locomotion, and 
(iii) contact-based striking behaviors.
This hierarchy stabilizes optimization and mitigates reward exploitation.
\paragraph{Facing Alignment Reward.}
We encourage the agent to orient its torso toward the opponent target.
Let $\mathbf{p}_e$ and $\mathbf{p}_t$ denote the ego root position and the opponent target position. The horizontal target direction in the world frame is defined as
\begin{equation*}
\mathbf{d}_w =
\frac{\pi_{xy}(\mathbf{p}_t - \mathbf{p}_e)}
{\|\pi_{xy}(\mathbf{p}_t - \mathbf{p}_e)\|},
\end{equation*}
where $\pi_{xy}(\cdot)$ projects a vector onto the horizontal plane.
Let $R_e \in SO(3)$ be the rotation matrix corresponding to $\mathbf{q}_e$. The direction expressed in the ego local frame is
\begin{equation*}
\mathbf{d}_l = R_e^{-1} \mathbf{d}_w = (d_x,d_y,d_z)^\top .
\end{equation*}
The facing reward is defined as
\begin{equation*}
r_{\mathrm{face}}
=
\exp\!\left(
-\frac{1-d_x}{\sigma_{\mathrm{face}}}
\right),
\end{equation*}
where $d_x$ represents the forward component of the target direction, and $\sigma_{\mathrm{face}}$ controls the sharpness.


\paragraph{Velocity Reward.}
We reward locomotion along the line of sight to the opponent.
Let $\mathbf{v}_e$ denote the ego root linear velocity and $\mathbf{d}_{2D}$ the normalized horizontal target direction:
\begin{equation*}
\mathbf{d}_{2D} =
\frac{(\mathbf{p}_t-\mathbf{p}_e)_{xy}}
{\|(\mathbf{p}_t-\mathbf{p}_e)_{xy}\|}.
\end{equation*}
The projected velocity along the target direction is
\begin{equation*}
v_\parallel = \mathbf{v}_{e,xy}^\top \mathbf{d}_{2D}.
\end{equation*}

Given the desired approaching speed $v_{\mathrm{tar}}$, we define the velocity error
\begin{equation*}
e_v = \max(0, v_{\mathrm{tar}} - v_\parallel),
\end{equation*}
and the reward
\begin{equation*}
r_{\mathrm{vel}}
=
\mathbb{I}[v_\parallel>0]
\exp\!\left(-\frac{e_v^2}{\sigma_{\mathrm{vel}}}\right),
\end{equation*}
where $\sigma_{\mathrm{vel}}$ determines the scaler.




\paragraph{Distance Reward.}
To prevent passive exploitation, we introduce a velocity-gated distance reward term.

Let $\mathbf{v}_{w_i}$ be the velocity of wrist $i$ and
\begin{equation*}
\hat{\mathbf{u}}_{i}
=
\frac{\mathbf{p}_{t}-\mathbf{p}_{w_i}}
{\|\mathbf{p}_{t}-\mathbf{p}_{w_i}\|+\epsilon}
\end{equation*}
be the unit vector from wrist $i$ to target torso link.
The projected punching speed is
\begin{equation*}
s_{i} = \mathbf{v}_{w_i} \cdot \hat{\mathbf{u}}_{i}.
\end{equation*}
A smooth gating function is defined as
\begin{equation*}
g_i = \sigma\big(\alpha(s_i-v_{\mathrm{th}})\big),
\end{equation*}
where $v_{\mathrm{th}}$ is the speed threshold and $\alpha$ controls steepness.
The active hitting reward is
\begin{equation*}
r_{\mathrm{dist}}
=
\frac{1}{H}
\sum_i
\exp\!\left(-\frac{d_i}{\sigma_{\mathrm{dist}}}\right)
g_i ,
\end{equation*}
with $d_i$ represents the distance from wrist $i$ to the opponent torso.






\paragraph{Hit Reward.}
In adversarial humanoid control, naive contact-based rewards often lead to degenerate behaviors such as body pushing, leaning, or passive collision, rather than deliberate punching actions. To address this issue, we design an offensive punch hit reward that jointly considers kinematic intent, geometric alignment, and physical contact.
Let $\mathbf{v}_{w_i}$ be the velocity of wrist $i$ and $\mathbf{v}_t$ be the velocity of the torso link. We define the relative punching velocities as:
\begin{equation*}
    \mathbf{v}_{w_i}^{\mathrm{rel}} = \mathbf{v}_{w_i} - \mathbf{v}_t,
\end{equation*}
for $i=l$ or $r$. 
These quantities represent the local striking motion of each fist with respect to the robot’s body.
When the agent only translates forward, the relative velocities turns to 0.
The effective attack speed is obtained by projecting relative velocity onto the attack direction:
\begin{equation*}
    s_i = \mathbf{v_{w_i}^{\mathrm{rel}}} \cdot \hat{\mathbf{u}}_i.
\end{equation*}
Let $\mathbf{F}^{\mathrm{opp}}_t$ be the contact force on opponent's torso link, $\mathbf{F}^{\mathrm{ego}}_i$ be the contact forces on left or right wrists.
We define binary contact indicators:
\begin{equation*}
    C^{\mathrm{opp}}_t = \mathbb{I}[\|\mathbf{F}^{\mathrm{opp}}_t > \tau_f], ~ C_i^{\mathrm{ego}} = \mathbb{I}[\mathbf{F}_i^{\mathrm{ego}} > \tau_f],
\end{equation*}
the valid hit signals for any wrist is:
\begin{equation*}
    H_i = C_i^{\mathrm{ego}} \land C^{\mathrm{opp}}_t \land s_i > \tau_v,
\end{equation*}
\begin{equation*}
    r_{\mathrm{hit}} = \mathbb{I}[H_l \lor H_r]
\end{equation*}
This hit reward is a high-level sparse reward granted for landing a clean, significant blow on the opponent, incentivizing decisive offensive actions beyond simple contact.

\paragraph{Style Reward.} The style reward $r_{\mathrm{style}}$ is derived from the discriminator's output to anchor the agent's behavior within the reference motion distribution:
\begin{equation*}
    r_{\mathrm{style}}(s_t, s_{t+1}) = \max [0, 1 - 0.25 \left(C(o_t^{\mathrm{disc}}, o_{t+1}^{\mathrm{disc}}) - 1\right)^2].
\end{equation*}

\paragraph{Reward Scheduling.}
To ensure a stable transition from imitation to task execution, we implement a dynamic weight schedule. During the initial warmup epochs, the agent is driven primarily by $r_{\mathrm{style}}$ to master stable bipedal locomotion and boxing postures. Subsequently, the task reward weight $w_{\mathrm{task}}$ is phased in, allowing the agent to optimize strike precision while adhering to the geometric and stylistic constraints imposed by the learned manifold and the AMP prior.

\section{Latent-Space Neural Fictitious Self-Play}
\label{app:stage3b}

This section includes the pseudo code in Algorithm~\ref{alg:ls_nfsp} and the training details of LS-NFSP.

\begin{algorithm}[t]
\caption{Latent-Space Neural Fictitious Self-Play (LS-NFSP)}
\label{alg:ls_nfsp}
\begin{algorithmic}[1]
\REQUIRE Two-player zero-sum game environment $\mathcal{M}$, anticipatory parameter $\eta\in[0,1]$, horizon $T$
\REQUIRE Best-response (RL) policies $\{\pi^{\mathrm{RL}}_{z,\theta_i}\}_{i\in\{1,2\}}$, average policies $\{\bar{\pi}_{z,\phi_i}\}_{i\in\{1,2\}}$
\REQUIRE On-policy rollout buffers $\{\mathcal{D}^{i}_{\mathrm{RL}}\}_{i\in\{1,2\}}$, reservoir buffers $\{\mathcal{D}^{i}_{\mathrm{SL}}\}_{i\in\{1,2\}}$

\FOR{iteration $k=1,2,\dots$}
    \STATE Reset $\mathcal{M}$ and observe $o_1^1,o_1^2$ \COMMENT{$o_t^i$ can include $(s_t^{\mathrm{prop},i}, s_t^{\mathrm{goal},i})$}
    \FOR{$t=1,2,\dots,T$}
        \FORALL{players $i\in\{1,2\}$}
            \STATE Sample mode $m_i \sim \mathrm{Bernoulli}(\eta)$ \COMMENT{$m_i{=}1$: best-response; $m_i{=}0$: average}
            \IF{$m_i = 1$}
                \STATE Sample latent action $z_t^i \sim \pi^{\mathrm{RL}}_{z,\theta_i}(\cdot\mid o_t^i)$
                \STATE Insert $(o_t^i, z_t^i)$ into $\mathcal{D}^{i}_{\mathrm{SL}}$ via reservoir sampling
            \ELSE
                \STATE Sample latent action $z_t^i \sim \bar{\pi}_{z,\phi_i}(\cdot\mid o_t^i)$
            \ENDIF
        \ENDFOR
        \STATE Step environment with joint latent action $(z_t^1,z_t^2)$; receive rewards $(r_t^1,r_t^2)$ and next obs $(o_{t+1}^1,o_{t+1}^2)$
        \FORALL{players $i\in\{1,2\}$}
            \STATE Append transition $(o_t^i,z_t^i,r_t^i,o_{t+1}^i)$ to $\mathcal{D}^{i}_{\mathrm{RL}}$
        \ENDFOR
    \ENDFOR

    \FORALL{players $i\in\{1,2\}$}
        \STATE Update $\theta_i$ with PPO on $\mathcal{D}^{i}_{\mathrm{RL}}$
        \STATE Update $\phi_i$ by minimizing $\mathcal{L}_{\mathrm{SL}}=\mathbb{E}_{(o,z)\sim \mathcal{D}^{i}_{\mathrm{SL}}}\big[\|\bar{\pi}_{z,\phi_i}(o)-z\|^2\big]$
    \ENDFOR
\ENDFOR
\STATE \textbf{return} Average policies $\{\bar{\pi}_{z,\phi_i}\}_{i\in\{1,2\}}$
\end{algorithmic}
\end{algorithm}

\subsection{Dual-Policy Architecture.}
To implement fictitious play, each agent maintains a dual-policy system. This system includes a best response policy ($\pi_{z}^{RL}$) and a average policy ($\bar \pi_{z}$). The best response policy ($\pi_{z}^{RL}$) is an on-policy actor optimized via PPO~\cite{schulman2017proximalpolicyoptimizationalgorithms} to exploit the opponent's current average strategy. The average policy ($\bar \pi_{z}$) is a supervised learning network that approximates the agent's historical distribution of best-response behaviors, acting as a stable strategic anchor.

\subsection{Reservoir-Based Strategy Buffer.}
To ensure that the average policy represents a uniform sampling of the agent's behavioral history, we utilize a reservoir sampling mechanism. Each agent stores its latent-space trajectories $(s_t^{\mathrm{prop}}, s_t^{\mathrm{goal}}, z_t)$ in a buffer $\mathcal{D}_{SL}$ of capacity $K$. For every new experience $n > K$, the sample replaces a random existing entry with probability $P = K/n$. This prevents the average strategy from being biased toward recent iterations and preserves tactical diversity.

\subsection{Reward Formulation.} 
Transitioning from the single-agent warmup to a zero-sum game requires an expanded task reward $r_{\mathrm{expand}}$, which incorporates competitive outcome conditions and defensive necessity. In addition to the basic striking and facing rewards, we introduce the following competitive terms:
\begin{equation*}
    r_{\mathrm{expand}} = w_{\mathrm{str}}r_{\mathrm{str}} - w_{\mathrm{def}}r_{\mathrm{def}} + w_{\mathrm{term}}r_{\mathrm{term}},
\end{equation*}

\paragraph{Strike Force Reward.}
We encourage effective physical impacts using contact forces.
Let $\mathbf{F}^{\mathrm{opp}}$ and $\mathbf{F}^{\mathrm{ego}}$ denote the net contact forces measured on the opponent and ego torso body, respectively. The reward is defined as
\begin{equation*}
r_{\mathrm{str}}
=
\|\mathbf{F}^{\mathrm{opp}}_k\|
-
\|\mathbf{F}^{\mathrm{ego}}_k\|
,
\end{equation*}


\paragraph{Defensive Penalty.}
To penalize being hit by high-velocity strikes, we define
\begin{equation*}
\delta_i =
\mathbb{I}
\big[
f_i>\tau_f
\wedge
s_i>\tau_v
\big],
\end{equation*}
\begin{equation*}
r_{\mathrm{def}}
=
\mathbb{I}[\delta_l \vee \delta_r].
\end{equation*}

\paragraph{Terminal Outcome Reward.}
The terminal reward reflects episode outcomes:
\begin{equation*}
r_{\mathrm{term}}
=
\mathbb{I}[h_{\mathrm{opp}}<h_{\min}]
-
\mathbb{I}[h_{\mathrm{ego}}<h_{\min}].
\end{equation*}

\paragraph{Summary.}
The total task reward in this competitive stage is: 
\begin{equation*}
    r_{\mathrm{com}} = r_{\mathrm{warmup}} + r_{\mathrm{expand}}
\end{equation*}
The detailed reward terms, weights and hyperparameters are shown in Table \ref{tab:reward}.

\begin{table*}[ht]
\centering
\caption{Summary of Boxing Reward Components in Competitive Stage.}
\label{tab:reward}
\begin{tabular}{l l l l l}
\toprule
\textbf{Term} &
\textbf{Formula} &
\textbf{Hyperparameters} &
\textbf{Weight} &
\textbf{Description} \\
\midrule

$r_{\mathrm{face}}$
&
$\exp(-\frac{1-d_x}{\sigma_{\mathrm{face}}})$
&
$\sigma_{\mathrm{face}}=0.5$
&
$w_{\mathrm{face}}=1.2$
&
Facing alignment
\\

$r_{\mathrm{vel}}$
&
$\mathbb{I}[v_\parallel>0]\exp(-\frac{e_v^2}{\sigma_{\mathrm{vel}}})$
&
$\sigma_{\mathrm{vel}}=0.25$, $v_{\mathrm{tar}}=1.0$
&
$w_{\mathrm{vel}}=0.5$
&
Velocity
\\

$r_{\mathrm{dist}}$
&
$\frac{1}{H}\sum e^{-d_i/\sigma}g_i$
&
$\sigma_{\mathrm{dist}}=1.0$, $v_{\mathrm{th}}=0.8$, $\alpha=10.0$
&
$w_{\mathrm{dist}}=1.5$
&
Velocity-gated approach
\\

$r_{\mathrm{hit}}$ & $\mathbb{I}[H_l \lor H_r]$
&
$\tau_f = 1.0, \tau_v=1.0$
& $w_{\mathrm{hit}} = 50$ & Offensive hit\\

$r_{\mathrm{def}}$
&
$\mathbb{I}[\delta_l\vee\delta_r]$
&
$\tau_f=1.0$, $\tau_v=1.0$
&
$w_{\mathrm{def}}=8.0$
&
Defensive penalty
\\

$r_{\mathrm{str}}$
&
$\|\mathbf{F}^{\mathrm{opp}}\|-\|\mathbf{F}^{\mathrm{ego}}\|$
&
--
&
$w_{\mathrm{str}}=0.3$
&
Delta striking force
\\
$r_{\mathrm{term}}$
&
$\mathbb{I}[h_{\mathrm{opp}}<h_{\min}]-\mathbb{I}[h_{\mathrm{ego}}<h_{\min}]$
&
$h_{\min}=0.4$
&
$w_{\mathrm{term}}=0.3$
&
Terminal outcome
\\
\bottomrule
\end{tabular}
\end{table*}

\subsection{Training Details}
\paragraph{Latent Action Mixing.} During interaction, agents select their behavior based on an exploration parameter $\eta \in [0, 1]$. The latent action $z$ is sampled according to a mixture strategy $\sigma=\eta \cdot \bar{\pi}_z + (1 - \eta) \cdot \pi^{RL}_z$, i.e.,
\begin{equation*}
    z \sim 
    \begin{cases} 
    \pi^{RL}_{z}(s), & \text{with probability } 1-\eta, \\
    \bar{\pi}_{z}(s), & \text{with probability } \eta.
    \end{cases}
\end{equation*}

\textbf{Optimization Objectives.} The best response policy is updated using the PPO objective to maximize the hybrid task and style reward. Simultaneously, the average policy $\bar \pi_{z}$ is optimized by minimizing the mean squared error (MSE) relative to the stored best-response actions:
\begin{equation*}
    \mathcal{L}_{SL} = \mathbb{E}_{(s^{\mathrm{prop}}, s^{\mathrm{goal}}, z) \sim \mathcal{D}_{SL}} \left[ \| \bar \pi_{z}(s^{\mathrm{prop}}, s^{\mathrm{goal}}) - z \|^2 \right]
\end{equation*}
Through this competitive co-evolution, the agents develop sophisticated boxing behaviors, while the latent-space constraints ensure that all maneuvers remain within the bounds of physical feasibility and stylistic authenticity. 

\section{Proof Sketch of Proposition~\ref{prop:latent_fsp}}
\label{app:proof_of_latent_fsp}
\begin{proof}
Let $\mathcal{Z}$ be a compact metric space and let $u:\mathcal{Z}\times\mathcal{Z}\to\mathbb{R}$
be the (induced) bounded payoff for player~1, with player~2 receiving $-u$.
Denote $|u(z_1,z_2)|\le U$ for all $(z_1,z_2)$ (Assumption~\ref{assump:bounded_payoff}).
We consider mixed strategies as probability measures over $\mathcal{Z}$.
For distributions $\mu,\nu\in\Delta(\mathcal{Z})$, define the bilinear extension
\begin{equation}
u(\mu,\nu)\triangleq \mathbb{E}_{z_1\sim\mu,\, z_2\sim\nu}\big[u(z_1,z_2)\big].
\end{equation}
Because $u$ is bounded and measurable, $u(\mu,\nu)$ is well-defined.
Moreover, since $\mathcal{Z}$ is compact and $u$ is continuous, the zero-sum game admits a value
\begin{equation}
V \triangleq \max_{\mu\in\Delta(\mathcal{Z})}\min_{\nu\in\Delta(\mathcal{Z})} u(\mu,\nu)
      = \min_{\nu\in\Delta(\mathcal{Z})}\max_{\mu\in\Delta(\mathcal{Z})} u(\mu,\nu),
\end{equation}
and at least one Nash equilibrium $(\mu^\star,\nu^\star)$ exists (standard minimax on compact domains).
Let $(z_1^t,z_2^t)_{t=1}^T$ be the sequence generated by the learning process. Define the empirical distributions (uniform averages)
\begin{equation}
\bar{\mu}_T \triangleq \frac{1}{T}\sum_{t=1}^T \delta_{z_1^t},\qquad \bar{\nu}_T \triangleq \frac{1}{T}\sum_{t=1}^T \delta_{z_2^t},
\end{equation}
where $\delta_{z}$ is a point mass at $z$. Let $\nu_{t-1}$ denote player~2's empirical distribution up to time $t-1$:
$\nu_{t-1}=\frac{1}{t-1}\sum_{s=1}^{t-1}\delta_{z_2^s}$ (similarly $\mu_{t-1}$). Assumption~\ref{assump:approx_br} states that the updates are approximate best responses: for some error sequences $(\epsilon_1^t,\epsilon_2^t)$ with $\epsilon_i^t\ge 0$,
\begin{align}
    u(z_1^t,\nu_{t-1}) &\ge \max_{z_1\in\mathcal{Z}} u(z_1,\nu_{t-1}) - \epsilon_1^t,
    \label{eq:abr1}\\
    u(\mu_{t-1},z_2^t) &\le \min_{z_2\in\mathcal{Z}} u(\mu_{t-1},z_2) + \epsilon_2^t.
    \label{eq:abr2}
\end{align}
Equivalently, player~2 approximately best-responds to \emph{minimize} player~1's payoff.

Given the regrets against the realized opponent actions as
\begin{align*}
R_1(T) &\triangleq \max_{z_1\in\mathcal{Z}}\sum_{t=1}^T u(z_1,z_2^t) \;-\; \sum_{t=1}^T u(z_1^t,z_2^t),\\
R_2(T) &\triangleq \sum_{t=1}^T u(z_1^t,z_2^t)\;-\; \min_{z_2\in\mathcal{Z}}\sum_{t=1}^T u(z_1^t,z_2).
\end{align*}
Then, a standard zero-sum argument shows that for any sequences, the empirical distributions satisfy
\begin{equation}
\max_{z_1\in\mathcal{Z}} u(z_1,\bar{\nu}_T) \;-\; u(\bar{\mu}_T,\bar{\nu}_T)
\;\le\; \frac{R_1(T)}{T},
\qquad
u(\bar{\mu}_T,\bar{\nu}_T)\;-\;\min_{z_2\in\mathcal{Z}} u(\bar{\mu}_T,z_2)
\;\le\; \frac{R_2(T)}{T},
\label{eq:exploit_regret}
\end{equation}
because $u(z_1,\bar{\nu}_T)=\frac{1}{T}\sum_{t=1}^T u(z_1,z_2^t)$ and similarly for the other term.
Thus, if both average regrets are small, $(\bar{\mu}_T,\bar{\nu}_T)$ is an approximate Nash Equilibrium, i.e.,
\begin{equation}
\max_{z_1} u(z_1,\bar{\nu}_T) - u(\bar{\mu}_T,\bar{\nu}_T) \le \epsilon,\quad
u(\bar{\mu}_T,\bar{\nu}_T) - \min_{z_2} u(\bar{\mu}_T,z_2) \le \epsilon \Rightarrow\ (\bar{\mu}_T,\bar{\nu}_T)\text{ is an $\epsilon$-Nash Equilibrium.}
\end{equation}
The dynamics \eqref{eq:abr1}--\eqref{eq:abr2} correspond to a follow-the-leader / best-response-to-average procedure. In zero-sum games with bounded payoffs, this yields sublinear regret up to logarithmic factors; informally, best-responding to the running average prevents persistent exploitation.
Concretely, one can show that there exists a constant $C$ depending only on the payoff range (e.g., $C=2U$ suffices for a crude bound) \cite{brown1951iterative,zinkevich2007regret} such that
\begin{equation}
R_1(T) \le C\log T + \sum_{t=1}^T \epsilon_1^t,\qquad
R_2(T) \le C\log T + \sum_{t=1}^T \epsilon_2^t.
\label{eq:regret_bound}
\end{equation}
Combining \eqref{eq:exploit_regret} and \eqref{eq:regret_bound} gives
\begin{equation}
\max_{z_1} u(z_1,\bar{\nu}_T) - u(\bar{\mu}_T,\bar{\nu}_T)
\le \frac{C\log T}{T} + \frac{1}{T}\sum_{t=1}^T \epsilon_1^t,
\end{equation}
and
\begin{equation}
u(\bar{\mu}_T,\bar{\nu}_T) - \min_{z_2} u(\bar{\mu}_T,z_2)
\le \frac{C\log T}{T} + \frac{1}{T}\sum_{t=1}^T \epsilon_2^t.
\end{equation}
Therefore, letting
\begin{equation}
\epsilon_T \triangleq \frac{C\log T}{T} + \frac{1}{T}\sum_{t=1}^T (\epsilon_1^t+\epsilon_2^t),
\end{equation}
we obtain that $(\bar{\mu}_T,\bar{\nu}_T)$ is an $\epsilon_T$-Nash equilibrium of the induced game.
As $T\to\infty$, if the average approximation errors vanish (or remain bounded), $\epsilon_T\to 0$ (or to the
average error floor), establishing the claimed convergence to an approximate Nash Equilibrium.
\end{proof}
\section{Experiment Details}
\label{app:exp}
\paragraph{Training Setup.}
We implement our framework using Isaac Lab~\cite{nvidia2025isaaclabgpuacceleratedsimulation} on the NVIDIA Omniverse platform, leveraging massive parallelization with 4,096 environments for large-scale training. The simulated Unitree G1 humanoid~\cite{unitreeg1} utilizes 29 degrees of freedom, with the physics engine operating at 200 Hz and the control policy at 50 Hz. Strategic co-evolution is conducted over a 32-dimensional latent manifold.

\paragraph{Multi-agent Training.}
In the final self-play stage, the LS-NFSP is configured with an exploration parameter $\eta = 0.1$ and a strategy reservoir capacity of $K = 10^6$ samples per agent. The reward function balances performance and style with a task reward weight $w_{\text{task}}$ of $0.8$ and an AMP style reward weight $w_{\text{style}}$ of $0.2$, while the prior loss coefficient $\lambda_{\text{prior}}$ is set to $0.001$. Training is executed on a single NVIDIA RTX 4090 GPU, incorporating domain randomization across friction, mass, and actuator gains to ensure robust tactical emergence.

To evaluate the tactical progression and physical performance of the LS-NFSP framework, we define a comprehensive set of metrics across three primary dimensions: tactical proficiency, physical stability, and stylistic authenticity. Win Rate ($WR$) is defined as the percentage of matches won over a series of 20 evaluation bouts, where a match is lost by the agent that first contacts the ground with any body part other than the feet, thereby triggering a fall-based termination. To quantify offensive precision, we define Hit Efficiency ($\eta_{hit}$) as the ratio of successful strikes, which is identified by the proportion of contacts exceeding the force threshold $F_{th}=10N$ to the total number of offensive attempts. Furthermore, the Engagement Rate ($ER$) evaluates the strategic positioning and operational persistence of an agent. It is calculated as the proportion of the episode duration in which the agent concurrently satisfies two spatial conditions: maintaining a distance $d_t$ within the effective striking range $[0.5, 1.2]$ and ensuring a facing alignment $\cos(\theta_t)$ that exceeds a predefined threshold $\tau_{face}=0.9$. This metric provides a rigorous measure of an agent's ability to maintain a viable combat posture, effectively penalizing tactical avoidance or non-confrontational movement
patterns.

Physical stability is assessed through Base Orientation Stability ($\mathcal{O}_{base}$) and Torque Smoothness ($\mathcal{S}_{\tau}$). To ensure a positive correlation with control quality, $\mathcal{O}_{base}$ is defined using an exponential kernels on the angular deviation: $\mathcal{O}_{base} = \mathbb{E} [ \exp(-\| \mathbf{g}_{base} - \mathbf{g}_{world} \|^2) ]$, where $\mathbf{g}_{base}$ and $\mathbf{g}_{world}$ represent the gravity vector in the robot's local frame and the global frame, respectively. A higher $\mathcal{O}_{base}$ score indicates a superior ability to maintain an upright posture under competitve perturbations. Torque Smoothness ($\mathcal{S}_{\tau}$) measures the magnitude of high-frequency oscillations in motor commands, calculated as the mean absolute change in joint torques across consecutive control steps: $\mathcal{S}_{\tau} = \mathbb{E} [ \| \tau_{t} - \tau_{t-1} \| ]$. A lower $\mathcal{S}_{\tau}$ value represents more hardware-friendly execution and reduced mechanical wear. Finally, the Stylistic Authenticity of the emergent behaviors is evaluated via qualitative visual inspection. Instead of relying on a singular numerical score, we provide a comparative analysis of simulation snapshots to assess the human-likeness of the motions. This includes identifying key boxing maneuvers such as slips, counters, and rhythmic footwork, while ensuring the absence of unnatural joint configurations or physically unsustainable postures that often characterize non-constrained reinforcement learning policies.

\subsection{Baselines}
\label{app:baselines}
We compare LS-NFSP against a set of baselines that isolate the contributions of (i) NFSP-style opponent averaging, (ii) the latent strategy manifold, (iii) curriculum warmup, and (iv) stylistic regularization.

\begin{itemize}
    \item \textbf{Naive Self-Play (Naive SP).} A standard self-play baseline~\cite{samuel1959some, hernandez2019generalized} trained in the same latent action space as ours, where each agent always plays against the opponent's latest policy snapshot (no opponent averaging).

    \item \textbf{Fictitious Self-Play (Fictitious SP).} A latent-space self-play baseline inspired by fictitious play~\cite{brown1951iterative}, where agents are trained by sampling opponents uniformly from a population of historical policies. Unlike LS-NFSP, it does not maintain an explicit learned average policy network.

    \item \textbf{LS-NFSP w/o AMP.} An ablation of our method where the Adversarial Motion Prior (AMP) style reward~\cite{peng2021amp} is removed during training, testing the importance of motion-style regularization for physically plausible and effective strikes.

    \item \textbf{PPO-Only.} A non-competitive baseline trained with PPO~\cite{schulman2017proximalpolicyoptimizationalgorithms} against a fixed opponent (no self-play). This tests whether competitive co-evolution is necessary to induce robust tactics.

    \item \textbf{Static-Target Specialist.} The policy after the warmup stage (trained to strike a passive sandbag) deployed without further competitive training. This measures how much performance comes from single-agent skill acquisition alone.

    \item \textbf{Self-Play w/o Warmup (SP w/o Warmup).} A curriculum ablation where self-play starts from scratch without the sandbag warmup, testing learning stability under sparse and non-stationary competitive signals.

    \item \textbf{29-DoF Action-Space Self-Play (29Dof Action-Space SP).} A baseline that performs self-play directly in the 29-DoF joint target space (i.e., without the learned latent manifold), highlighting the instability of competitive learning in raw motor command space.
\end{itemize}

\end{document}